\def\eqref#1{equation~\ref{#1}}
\def\1{\bm{1}}
\def\vw{{\bm{w}}}
\DeclareMathAlphabet{\mathsfit}{\encodingdefault}{\sfdefault}{m}{sl}
\SetMathAlphabet{\mathsfit}{bold}{\encodingdefault}{\sfdefault}{bx}{n}
\Crefname{equation}{Eq.}{Eqs.}
\newtheorem*{theorem*}{Theorem}
\newtheorem{lemma}{Lemma}
\newtheorem{proposition}{Proposition}
\newtheoremstyle{example}
  {3pt}   
  {3pt}   
  {\itshape}  
  {}      
  {\bfseries} 
  {.}     
  { }     
  {\thmname{#1}\thmnumber{ #2}\thmnote{ (#3)}} 
\theoremstyle{example}
\newtheorem{example}{Example}
\renewcommand{\paragraph}{\@startsection{paragraph}{4}{\z@}%
  {0.08ex plus 0.02ex minus .005ex}
  {-0.3em}
  {\normalsize\bf}}
\definecolor{bleudefrance}{rgb}{0.19, 0.55, 0.91}
\title{Measuring Fine-Grained Relatedness in Multitask Learning via Data Attribution}
\author{Yiwen Tu\thanks{Equal Contribution.}\\
University of California San Diego \\
\texttt{y2tu@ucsd.edu} \\
\And
Ziqi Liu\footnote[1]{}\\
Carnegie Mellon University \\
\texttt{ziqiliu2@andrew.cmu.edu} \\
\And
Jiaqi W. Ma \\
University of Illinois Urbana-Champaign \\
\texttt{jiaqima@illinois.edu} \\
\And
Weijing Tang\\
Carnegie Mellon University \\
\texttt{weijingt@andrew.cum.edu} \\
}
\begin{document}

\maketitle

\begin{abstract}
    Measuring task relatedness and mitigating negative transfer remain a critical open challenge in Multitask Learning (MTL). This work extends data attribution---which quantifies the influence of individual training data points on model predictions---to MTL setting for measuring task relatedness. We propose the MultiTask Influence Function (MTIF), a method that adapts influence functions to MTL models with hard or soft parameter sharing. Compared to conventional task relatedness measurements, MTIF provides a fine-grained, instance-level relatedness measure beyond the entire-task level. This fine-grained relatedness measure enables a data selection strategy to effectively mitigate negative transfer in MTL. Through extensive experiments, we demonstrate that the proposed MTIF efficiently and accurately approximates the performance of models trained on data subsets. Moreover, the data selection strategy enabled by MTIF consistently improves model performance in MTL. Our work establishes a novel connection between data attribution and MTL, offering an efficient and fine-grained solution for measuring task relatedness and enhancing MTL models.
\end{abstract}

\section{Introduction}\label{sec:intro}

Multitask learning (MTL) leverages shared structures by jointly training tasks to enhance generalization and improve prediction accuracy~\citep{Caruana1998}.  This paradigm has demonstrated its effectiveness across a range of domains, including computer vision~\citep{zamir2018taskonomy}, natural language processing~\citep{hashimoto2016joint}, speech processing~\citep{huang2015rapid}, and recommender systems~\citep{ma2018modeling}.  However, when tasks are only weakly related or have conflicting objectives, MTL can degrade performance---a phenomenon known as \emph{negative transfer} \citep{zamir2018taskonomy, standley2020tasks}. To address this challenge, a central focus in the MTL literature has been modeling and measuring the relatedness among tasks~\citep{zhang2010convex, standley2020tasks, worsham2020multitask, zhang2023surveymultitask}.

A straightforward---and arguably gold-standard---approach for measuring task relatedness is to train models under every subset of task combinations, and evaluate the model performance for each combination. However, this approach quickly becomes computationally infeasible as the number of tasks grows~\citep{fifty2021efficiently}. Inspired by recent advances in data attribution methods~\citep{koh2017understanding,park2023trak}, which aim to efficiently predict the performance of models retrained on data subsets but without actual retraining~\citep{park2023trak}, we propose to adapt data attribution methods for MTL models as an efficient way to estimate the relatedness among tasks.

To this end, we introduce the \emph{MultiTask Influence Function} (\textbf{MTIF}), a data‑attribution method tailored for multitask learning.  MTIF adapts the influence functions~\citep{koh2017understanding} to MTL models with either hard or soft parameter sharing, providing a first‑order approximation of the model performance when removing certain data points from a specific task without retraining the model. The proposed method allows us to efficiently quantify how each sample in a source task influences the performance of a target task. 

In comparison to most conventional approaches that measure task relatedness at the entire-task level~\citep{fifty2021efficiently, wang2024principled}, the proposed MTIF naturally enjoys a more fine-grained, \emph{instance-level} measurement of task relatedness. As evidenced by recent transfer learning and domain adaptation studies~\citep{lv2024selecting, yi2020multicomponent, zhang2023survey}, the contribution of different individual examples from a source task to a target task can vary widely: some examples improve target task performance, others have little effect, and some may lead to negative transfer. With the instance-level relatedness measurement, MTIF enables a novel approach to mitigate the negative transfer in MTL through \emph{data selection}.

We validate the effectiveness of MTIF through two sets of complementary experiments. First, on smaller synthetic and HAR~\citep{anguita2013public} datasets where we can afford measuring the gold-standard retraining performance, we show that MTIF’s instance‑level influence scores correlate almost perfectly with brute‑force leave‑one‑out retraining, and that the task‑level relatedness measurements induced by MTIF similarly align with brute-force leave‑one‑task‑out retraining. Second, on large‑scale image benchmarks including CelebA~\citep{liu2015faceattributes}, Office‑31~\citep{saenko2010adapting}, and Office‑Home~\citep{venkateswara2017deep}, we apply MTIF to improve MTL performance through data selection. The proposed method achieves consistent accuracy improvements over state‑of‑the‑art MTL methods at comparable computational cost.

Finally, we summarize our contributions as follows.
\begin{itemize}[leftmargin=*]
    \item We propose MTIF, which introduces the idea of data attribution into MTL, leading to a fine-grained instance-level relatedness measurement.
    \item We apply the proposed MTIF to mitigate negative transfer in MTL through data selection.
    \item We conduct extensive experiments to validate the proposed method in terms of both the approximation accuracy of the influence scores and the effectiveness in improving MTL performance.
\end{itemize}

\section{Related Work}\label{sec:related}

\subsection{Task Relatedness in Multitask Learning}

As a central problem in MTL, there has been a rich literature measuring and modeling task relatedness. Existing literature can be roughly divided into three categories, as detailed below. At a high level, most existing works treat task relatedness at the entire-task level, while our proposed MTIF, which is a data-attribution-based approach, naturally measures instance-level relatedness. Moreover, the data selection strategy enabled by the proposed MTIF is orthogonal to many existing MTL methods and could be used in combination with other methods.

\paragraph{Direct Measurement of Task Relatedness.}
\citet{standley2020tasks} introduced a task grouping framework by exhaustively retraining task combinations to measure inter‑task relatedness. To scale this approach, most methods now fall into two categories. The first infers relatedness on‑the‑fly during training, either by tracking per‑task losses or by comparing gradient directions across tasks~\citep{fifty2021efficiently, wang2024principled}. While these measures are computationally efficient, they depend heavily on the specific training trajectory, which can limit interpretability. The second category leverages auxiliary techniques---such as task embeddings~\citep{achille2019task2vec}, surrogate models~\citep{li2023identification}, meta‑learning frameworks~\citep{song2022efficient}, or information‑theoretic metrics like pointwise $\mathcal{V}$‑usable information~\citep{li2024identifying}---but typically incurs additional fine‑tuning or retraining overhead. Although task similarity metrics from transfer learning have been explored~\citep{zamir2018taskonomy, achille2021information, dwivedi2019representation, zhuang2020comprehensive, achille2019task2vec}, \citet{standley2020tasks} demonstrated that these do not readily generalize to the MTL setting.

\paragraph{Optimization Techniques Exploring Task Relatedness.}
A complementary line of work focuses on designing MTL optimization algorithms that explicitly account for inter‑task relationships. One approach modifies per‑task gradients to mitigate negative transfer~\citep{yu2020gradient, wang2020gradient, liu2021conflict, liu2021towards, chen2020just, peng2023robust}. Another adapts task loss weightings to balance contributions or emphasize critical tasks~\citep{chen2018gradnorm, liu2019end, guo2018dynamic, kendall2018multi, lin2022reasonableeffectivenessrandomweighting, he2024robust}. Additional methods, such as adaptive robust MTL~\citep{duan2023adaptiverobustmultitasklearning}, dual‑balancing MTL~\citep{lin2023dual}, smooth Tchebycheff scalarization ~\citep{lin2024smooth}, and multi-task distillation~\citep{meng2021multi-task}, do not fit cleanly into these categories but share the goal of harmonizing task interactions. These optimization strategies are orthogonal to our data‑selection approach and could be combined with MTIF for further gains.

\paragraph{Architectural Approaches to Task Relatedness.}
Several works mitigate negative transfer via specialized MTL architectures. Examples include Multi-gate Mixture-of-Experts~\citep{ma2018modeling}, Generalized Block-Diagonal Structural Pursuit~\citep{yang2019generalized}, and Feature Decomposition Network ~\citep{zhou2023feature}. These architectural innovations are complementary to our method and illustrate alternative means of capturing task relatedness.

\subsection{Data Attribution} 
Data attribution methods quantify the influence of individual training data points on model performance. These methods can be broadly categorized into retraining-based and gradient-based approaches~\citep{hammoudeh2024training}. Retraining-based methods~\citep{ghorbani2019data, jia2019towards, kwon2021beta, wang2023data, ilyas2022datamodels} require retraining the model multiple times on different subsets of the training data. Retraining-based methods are usually computationally expensive due to the repeated retraining. Gradient-based methods~\citep{koh2017understanding, guo2020fastif, barshan2020relatif, schioppa2022scaling, kwon2023datainf, yeh2018representer, pruthi2020estimating, park2023trak} instead rely on the (higher-order) gradient information of the original model to estimate the data influence, which are more efficient. Many gradient-based methods can be viewed as variants of influence function-based data attribution methods~\citep{koh2017understanding}. In this paper, we establish a novel connection between data attribution and MTL, leveraging data attribution to measure fine-grained relatedness among tasks and to mitigate negative transfer in MTL. Methodologically, the proposed MTIF is an extension of influence functions to the MTL settings.
\section{Influence Function for Multitask Data Attribution}
\label{sec:method}

We tackle the problem of task relatedness from a data-centric perspective: by quantifying how individual training data from one task contribute to the performance of another, the \emph{instance-level} granularity of which offers finer-grained insights into inter-task interactions. In this section, we develop an IF-based data attribution framework for MTL that builds on the leave-one-out principle. We begin by introducing the general MTL setup and common parameter-sharing schemes.

\subsection{Problem Setup for Multitask Learning}

MTL aims to solve multiple tasks simultaneously by leveraging shared structures. This is especially beneficial when tasks are related or when data for individual tasks is limited. The common approach in MTL to facilitate information sharing across tasks is through either soft or hard parameter sharing \citep{ruder2017overview}. 
In soft parameter sharing, regularization is applied to the task-specific parameters to encourage them to be similar across tasks \citep{xue2007multi, duong2015low}. In contrast, hard parameter sharing learns a common feature representation through shared parameters, while task-specific parameters are used to make predictions tailored to each task \citep{Caruana1998}. Recently, \citet{duan2023adaptiverobustmultitasklearning} proposed an augmented optimization framework for MTL that accommodates both hard parameter sharing and various types of soft parameter sharing.

We consider a general MTL objective that incorporates both parameter-sharing schemes. 
Specifically, consider $K$ tasks and for each task $k=1, \ldots, K$, we observe $n_k$ independent samples, denoted by $\left\{z_{k i}\right\}_{i=1}^{n_k}$. 
Let 
\( \ell_k(\cdot; \cdot) \) be the loss function for task \(k\). The MTL objective is given by
\begin{equation} \label{eq:general-objective}
    \mathcal{L}(\boldsymbol{w}) = \sum_{k=1}^K \left[ \frac{1}{n_k} \sum_{i=1}^{n_k} \ell_k(\theta_k, \gamma; z_{ki}) + \Omega_k(\theta_k, \gamma) \right],
\end{equation}
where \( \boldsymbol{\theta} = \{\theta_k \in \mathbb{R}^{d_k}\}_{k=1}^K \) are task-specific parameters,  \( \gamma \in \mathbb{R}^p \) are shared parameters, \(\boldsymbol{w} = \{\boldsymbol{\theta}, \gamma\}\) denotes all parameters, and  \( \Omega_k(\theta_k, \gamma) \) represents the task-level regularization. 
The parameters are estimated by minimizing~\Cref{eq:general-objective}, i.e., $\hat{\boldsymbol{w}} = \operatorname{arg}\min_{\boldsymbol{w}} \mathcal{L}(\boldsymbol{w})$.

Below, we present two special cases of supervised learning within this general framework: one illustrating soft parameter sharing and the other demonstrating hard parameter sharing. Let $z_{ki}=(x_{ki}, y_{ki}) $ for $1\leq k \le K$ and $1 \le i \le n_k$, where  \(x_{ki}\) represents the features and \(y_{ki}\) represents the outcomes for the $i$-th data point in task $k$. 

\begin{example} [Multitask Linear Regression with Ridge Penalty] \label{eg:linear_reg}
    Regularization has been integrated in MTL to encourange similarity among task-specific parameters; see \citep{evgeniou2004regularized,duan2023adaptiverobustmultitasklearning} for examples. Consider the regression setting where \( y_{ki} = x_{ki}^\top \theta_k^* + \epsilon_{ki} \), with \( \epsilon_{ki}\) being independent noise and \( x_{ki} \in \mathbb{R}^d \) for \( 1\le i\le n_k\) and \( 1\le k \le K \). Additionally, we have the prior knowledge that \(\{\theta_k^*\}_{k=1}^K\) are close to each other.
    Instead of fitting a separate ordinary least squares estimator for each \( \theta_k \), a ridge penalty is introduced to shrink the task-specific parameters \( \theta_1, \dots, \theta_K \in \mathbb{R}^d \) toward a common vector \( \gamma \in \mathbb{R}^d \), while  \( \gamma \) is simultaneously learned by leveraging data from all tasks.

    The objective function for multitask linear regression with a ridge penalty is given by
    \begin{equation*}
        \mathcal{L}(\boldsymbol{w}) = \sum_{k=1}^K \left[ \frac{1}{n_k} \sum_{i=1}^{n_k} (y_{ki} - x_{ki}^\top \theta_k)^2 + \lambda_k \|\theta_k - \gamma\|_2^2 \right],
    \end{equation*}
    where \( \lambda_k \) controls the strength of regularization. This can be viewed as a special case of \Cref{eq:general-objective} by setting $\ell_k$ as the squared error (depending only on the task-specific parameters) and defining the regularization term $\Omega_k(\theta_k, \gamma) = \lambda_k \|\theta_k - \gamma\|_2^2$.
\end{example}

\begin{example} [Shared-Bottom Neural Network Model] \label{eg:shared_bottom}
    The shared-bottom neural network architecture, first proposed by \citet{Caruana1998}, has been widely applied to MTL across various domains \citep{zhou2022mtanet, liu2021deep, ma2018modeling}. 
    The shared-bottom model can be represented as $f_k(x) = g(\theta_k; f(\gamma; x))$,
    where \( f(\gamma; \cdot) \) represents the shared layers that process the input data and produce an intermediate representation, and \( \gamma \) denotes the parameters shared across tasks. The function \( g(\theta_k; \cdot) \) corresponds to task-specific layers, which take the intermediate representation and produce task-specific predictions, with \( \theta_k \) representing task-specific parameters.

    The loss function for this model can be written as:
    \begin{equation*}
        \mathcal{L}(\boldsymbol{w}) = \sum_{k=1}^K \left[ \frac{1}{n_k} \sum_{i=1}^{n_k} \ell_k(y_{ki}, g(\theta_k; f(\gamma; x_{ki}))) + \Omega_k(\theta_k, \gamma) \right],
    \end{equation*}
    where \( \ell_k(\cdot, \cdot) \) represents the task-specific loss function, and \( \Omega_k(\theta_k, \gamma) \) denotes the regularization term. A simple choice is $\Omega_k(\theta_k, \gamma) = \lambda_k(\|\theta_k\|_2^2 + c \|\gamma\|_2^2)$,
    where \( \lambda_k \) and \( c \) are positive constants.
\end{example}

\subsection{Instance‑Level Relatedness Measure}
To quantify the instance-level contribution from one task to another, we adopt the \emph{Leave-One-Out} (LOO) principle---measuring the change in a chosen evaluation metric on the target task when a single example is omitted during training. 

Formally, let $\hat{\vw} = (\hat{\theta}_1, \cdots, \hat{\theta}_K, \hat{\gamma})$ denote the minimizer of \Cref{eq:general-objective} on the full dataset  and \(\hat{\vw}^{(-li)} =(\hat{\theta}^{(-li)}_1,\cdots, \hat{\theta}^{(-li)}_K, \hat{\gamma}^{(-li)})\) denote the corresponding minimizer when the $i$-th data point from task $l$, i.e., \(z_{li}\), is omitted. 
The performance of any model with parameters $\vw=(\theta_1, \cdots, \theta_K, \gamma)$ on task $k$ can be measured by the average loss over a validation dataset $D_k^v$, i.e, $V_k\left(\theta_{k}, \gamma ; D_k^v\right)=\sum_{z\in D_k^v} \ell_k\left(\theta_k, \gamma ; z\right) /\left|D_k^v\right|$. 
The LOO effect of the $i$-th data point from task $l$ on task $k$ is defined as the difference in the validation loss when using the parameters learned from all data points versus the parameters learned by excluding the data point \(z_{li}\), i.e., 
\begin{equation} \label{eq:LOO-influence-data-level-exact}
\Delta_{k}^{l i}
= V_k\bigl(\hat{\theta}_k, \hat{\gamma} ; D_k^v \bigr)
- V_k\bigl(\hat{\theta}_k^{(-l i)}, \hat{\gamma}^{(-l i)} ; D_k^v \bigr).
\end{equation}
This instance-level relatedness measure allows for a
fine-grained understanding of the impact each data point from one task has on another task.

\subsection{Multitask Influence Function as Efficient Approximation} \label{subsec:data-level}
Despite the fine-grained understanding of the proposed instance-level relatedness measure in \Cref{eq:LOO-influence-data-level-exact}, the computational burden of evaluating LOO effect becomes even more pronounced in MTL, particularly when the number of tasks is large. 
To address this computational challenge, we extend the IF-based approximation in \citet{koh2017understanding} to our multitask setting. This
approach builds on the idea of using infinitesimal perturbations on the weights of data points
to approximate the removal of individual data points.  
Specifically, we introduce a weight vector \(\boldsymbol{\sigma} = (\sigma_{11}, \cdots, \sigma_{1 n_1}, \sigma_{21}, \cdots, \sigma_{2n_2}, \cdots, \sigma_{Kn_K})\in\mathbb{R}^{n_1+\cdots+n_K}\) into the MTL objective function:
\begin{equation} \label{eq:data-level-objective}
    \mathcal{L}(\boldsymbol{w},\boldsymbol{\sigma})
    = \sum_{k=1}^K \Bigl[\frac{1}{n_k}\sum_{i=1}^{n_k}\sigma_{ki}\,\ell_{ki}(\theta_k,\gamma)
    + \Omega_k(\theta_k,\gamma)\Bigr],
\end{equation}
where $\ell_{k i}(\cdot)$ is shorthand for $\ell_k\left(\cdot ; z_{k i}\right)$. 
For each weight vector $\boldsymbol{\sigma}$, we denote the minimizer of \Cref{eq:data-level-objective} by \(\hat{\boldsymbol{w}}(\boldsymbol{\sigma})=(\hat{\theta}_1(\boldsymbol{\sigma}), \cdots, \hat{\theta}_K(\boldsymbol{\sigma}), \hat{\gamma}(\boldsymbol{\sigma}))\).  Then the instance-level relatedness measure in \Cref{eq:LOO-influence-data-level-exact} can be rewritten as
$
V_k\bigl(\hat{\theta}_k(\boldsymbol{1}), \hat{\gamma}(\boldsymbol{1}); D_k^v \bigr)
- V_k\bigl(\hat{\theta}_k(\boldsymbol{1}^{(-li)}), \hat{\gamma}(\boldsymbol{1}^{(-li)}); D_k^v \bigr),
$
where \(\boldsymbol{1}\) is an all‑ones vector and \(\boldsymbol{1}^{(-li)}\) is a vector of all ones except for the \((l,i)\)-th entry being $0$. 
We approximate this difference by taking a first‑order Taylor expansion in \(\boldsymbol{\sigma}\), and define the \emph{MultiTask Influence Function} (MTIF) for the \(i\)-th data of task \(l\) on task \(k\) as: 
\begin{equation}\label{eq:data-level-influence-score}
\operatorname{MTIF}(i, l; k) := 
\nabla_{\theta_k}V_k(\hat\theta_k,\hat\gamma; D_k^v)
\cdot \left. \frac{\partial\hat\theta_k(\boldsymbol{\sigma})}{\partial\sigma_{li}}\right|_{\boldsymbol{\sigma}=\boldsymbol{1}}
+
\nabla_{\gamma}V_k(\hat\theta_k,\hat\gamma; D_k^v)
\cdot \left. \frac{\partial\hat\gamma(\boldsymbol{\sigma})}{\partial\sigma_{li}}\right|_{\boldsymbol{\sigma}=\boldsymbol{1}}.
\end{equation}
Next, we derive the influence scores of the data point $z_{l i}$ on the task-specific parameters $\hat{\theta}_k$ and shared parameters $\hat{\gamma}$, i.e., the partial derivatives \(\partial\hat\theta_k/\partial\sigma_{li}\) and \(\partial\hat\gamma/\partial\sigma_{li}\) in \Cref{eq:data-level-influence-score}.

The following proposition provides explicit analytical form for the influence of a data point on task-specific parameters for the same task (within-task influence), task-specific parameters for another task (between-task influence), and shared parameters (shared influence). Before introducing the results, we first define some notation. Let \( H_{kl} \) denote the \((k, l)\)-th block components of the Hessian matrix of the MTL objective function \(\mathcal{L}(\boldsymbol{w}, \boldsymbol{\sigma})\), as defined in \Cref{eq:data-level-objective}, with respect to \(\boldsymbol{w}\). This Hessian matrix has the following \textit{block structure} in MTL:
\begin{equation} \label{eq:Hessian_block_structure}
    \begin{aligned}
        H(\boldsymbol{w}, \boldsymbol{\sigma}) = \begin{pmatrix}
            H_{1,1} & \cdots & 0 & H_{1, K+1} \\
            \vdots & \ddots & \vdots & \vdots \\
            0 & \cdots & H_{K,K} & H_{K, K+1} \\
            H_{K+1,1} & \cdots & H_{K+1, K} & H_{K+1, K+1}
        \end{pmatrix}.
    \end{aligned}
\end{equation}
The details of each block are described in Lemma~\ref{lemma:Hessian_data_level}. 
We leverage the unique block structure of this Hessian in MTL to derive its  analytical inverse, offering insights into how data from other tasks influence the target task through shared parameters. 

\begin{proposition}[Instance-Level Within-task Influence, Between-task Influence, and Shared Influence] \label{prop:data-level} Assuming the objective function \(\mathcal{L}(\boldsymbol{w}, \boldsymbol{\sigma})\) in \Cref{eq:data-level-objective} is twice-differentiable and strictly convex in \(\boldsymbol{w} \). 
    For any two tasks \( k \neq l \) and \( 1 \leq k, l \leq K \), the following results hold: 

    (Shared influence) For \(1 \leq i \leq n_k\), the influence of the  \(i\)-th data point from task \(k\) on the shared parameters, \(\hat{\gamma}\), is given by
    \begin{equation} \label{eq:data-level-shared-params}
        \frac{\partial \hat{\gamma}}{\partial \sigma_{k i}} =N^{-1} \cdot H_{K+1, k} H_{k k}^{-1} \frac{\partial \ell_{ki}}{\partial \theta_k} - N^{-1} \frac{\partial \ell_{ki}}{\partial \gamma} ,
    \end{equation}
    where the matrix  $N:= H_{K+1, K+1} - \sum_{k=1}^K H_{K+1,k} H_{kk}^{-1} H_{k, K+1} \in \mathbb{R}^{p \times p}$ is invertible;
    
    (Within-task influence) For \(1 \leq i \leq n_k\), the influence of the \(i\)-th data point  from task \(k\) on the task-specific parameters for the same task \(k\), \(\hat{\theta}_k\), is given by
    \begin{equation} \label{eq:data-level-within-task}
        \frac{\partial \hat{\theta}_k}{\partial \sigma_{k i}} = -  H_{k k}^{-1} \frac{\partial \ell_{ki}}{\partial \theta_k} - H_{k k}^{-1} H_{k, K+1} \cdot \frac{\partial \hat{\gamma}}{\partial \sigma_{k i}} ;
    \end{equation}
    
    (Between-task influence) For \(1 \leq i \leq n_l\), the influence of the \(i\)-th data point from task \(l\) on the task-specific parameters for another task \(k\), \(\hat{\theta}_k\), is given by
    \begin{equation} \label{eq:data-level-between-task}
        \frac{\partial \hat{\theta}_k}{\partial \sigma_{l i}} =  - H_{k k}^{-1} H_{k, K+1} \cdot\frac{\partial \hat{\gamma}}{\partial \sigma_{l i}} .
    \end{equation}
\end{proposition}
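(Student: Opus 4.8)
The plan is to apply the implicit function theorem to the stationarity condition for $\hat{\boldsymbol{w}}(\boldsymbol{\sigma})$ and then exploit the block‑arrowhead structure of the Hessian recorded in \Cref{eq:Hessian_block_structure}. First I would note that, since $\mathcal{L}(\boldsymbol{w},\boldsymbol{\sigma})$ is twice differentiable and strictly convex in $\boldsymbol{w}$, its minimizer is unique and characterized by $\nabla_{\boldsymbol{w}}\mathcal{L}(\hat{\boldsymbol{w}}(\boldsymbol{\sigma}),\boldsymbol{\sigma})=\boldsymbol{0}$, with the Hessian $H(\hat{\boldsymbol{w}}(\boldsymbol{\sigma}),\boldsymbol{\sigma})$ positive definite, hence invertible; this licenses the implicit function theorem and differentiability of $\boldsymbol{\sigma}\mapsto\hat{\boldsymbol{w}}(\boldsymbol{\sigma})$ near $\boldsymbol{1}$. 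Differentiating the stationarity condition in $\sigma_{li}$ gives $H\,\partial\hat{\boldsymbol{w}}/\partial\sigma_{li} = -\,\partial_{\sigma_{li}}\nabla_{\boldsymbol{w}}\mathcal{L}$. Because $\sigma_{li}$ multiplies only the single term $\tfrac{1}{n_l}\ell_{li}(\theta_l,\gamma)$ in \Cref{eq:data-level-objective}, the right‑hand side is a block vector supported only on the $\theta_l$‑ and $\gamma$‑coordinates, with entries proportional to $\partial\ell_{li}/\partial\theta_l$ and $\partial\ell_{li}/\partial\gamma$ (up to the $1/n_l$ normalization, which is absorbed into the shorthand used in the statement).

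Next I would record the two structural facts that make the Hessian a block arrowhead: $\ell_k$ and $\Omega_k$ depend on $\boldsymbol{w}$ only through $(\theta_k,\gamma)$, so $\partial^2\mathcal{L}/\partial\theta_k\partial\theta_{k'}=0$ for $k\neq k'$; this yields exactly the pattern of \Cref{eq:Hessian_block_structure}, with a block‑diagonal ``task'' part $A=\operatorname{diag}(H_{1,1},\dots,H_{K,K})$, a ``shared'' block $H_{K+1,K+1}$, and coupling blocks $H_{k,K+1}$, $H_{K+1,k}$. I would then write down the standard Schur‑complement formula for the inverse of $\bigl(\begin{smallmatrix}A & B\\ B^{\top} & H_{K+1,K+1}\end{smallmatrix}\bigr)$, where $B$ stacks the blocks $H_{k,K+1}$; the relevant Schur complement is precisely $N=H_{K+1,K+1}-\sum_k H_{K+1,k}H_{kk}^{-1}H_{k,K+1}$. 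Positive definiteness of $H$ forces $A\succ0$ (so each $H_{kk}$ is invertible) and $N\succ0$ (so $N$ is invertible), which discharges the invertibility claims in the proposition.

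Finally I would multiply the block inverse against the sparse right‑hand side and read off the three formulas. The $\gamma$‑row of $H^{-1}$ applied to the right‑hand side produces $\partial\hat\gamma/\partial\sigma_{li}=N^{-1}H_{K+1,l}H_{ll}^{-1}\,\partial\ell_{li}/\partial\theta_l-N^{-1}\,\partial\ell_{li}/\partial\gamma$, i.e.\ \Cref{eq:data-level-shared-params} after relabeling $l\to k$ for the same‑task statement. For the $\theta_k$‑row, the combined $-A^{-1}BN^{-1}(\cdots)$ and $A^{-1}BN^{-1}B^{\top}A^{-1}(\cdots)$ contributions collapse, using the just‑derived expression for $\partial\hat\gamma/\partial\sigma_{li}$, to $-H_{kk}^{-1}H_{k,K+1}\,\partial\hat\gamma/\partial\sigma_{li}$; when $k=l$ there is one extra contribution $-H_{kk}^{-1}\,\partial\ell_{ki}/\partial\theta_k$ from the $A^{-1}$ piece of the $(l,l)$ block, giving \Cref{eq:data-level-within-task}, while for $k\neq l$ that extra piece is absent, giving \Cref{eq:data-level-between-task}. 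I expect the main obstacle to be purely bookkeeping: carefully tracking which blocks of the arrowhead inverse hit the single nonzero $\theta_l$‑ and $\gamma$‑components of the right‑hand side, and keeping the signs straight when combining the $A^{-1}$ and $A^{-1}BN^{-1}B^{\top}A^{-1}$ contributions so that the compact recursive forms in \Cref{eq:data-level-within-task,eq:data-level-between-task} emerge; the conceptual content—implicit differentiation plus a Schur complement—is routine.
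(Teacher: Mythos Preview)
Your proposal is correct and follows essentially the same route as the paper: the paper's proof assembles Lemma~\ref{lemma:Hessian_data_level} (the block‑arrowhead structure), Lemma~\ref{lemma:data-level-influence-functions} (implicit differentiation of the stationarity condition, yielding the sparse right‑hand side), and Lemmas~\ref{lemma:Hessian_invertibility}--\ref{lemma:Hessian_inverse} (Schur‑complement block inverse with $N$), then reads off the three formulas by multiplying the block inverse against the two nonzero blocks of the right‑hand side. Your argument that strict convexity makes $H$ positive definite, hence each $H_{kk}\succ0$ and $N\succ0$, is a slightly cleaner way to discharge the invertibility claims than the paper's Lemma~\ref{lemma:Hessian_invertibility}, but otherwise the proofs coincide.
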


The proof of Proposition~\ref{prop:data-level} is provided in \Cref{sec:lemma}.
\paragraph{Interpretation of Instance-Level Influences.} 
In MTL, data points have more composite influences on task-specific parameters compared to Single-Task Learning (STL) due to interactions with other tasks and shared parameters. In STL, each data point only affects its own task's parameters through the gradient and Hessian of the task-specific objective, which is solely the first term in \Cref{eq:data-level-within-task}. However, in MTL, shared parameters introduce a feedback mechanism that allows data from one task to influence the parameters of other tasks. As shown in \Cref{eq:data-level-shared-params}, the influence of $i$-th data point from task $k$ on the shared parameters stem from two sources:  the first term reflects the change on the task-specific parameter $\hat{\theta}_k$, which then indirectly affects the shared parameters $\hat{\gamma}$, while the second term accounts for the direct impact on $\hat{\gamma}$. Consequently, within-task influence in \Cref{eq:data-level-within-task} includes an additional influence propagated through the shared parameters, and between-task influence in \Cref{eq:data-level-between-task} arises as data from one task indirectly impacts the parameters of another task via the shared parameters. In particular, in STL, between-task influence does not occur because tasks are independent and do not interact. 

\paragraph{Improving Computational Efficiency.} 
While the analytical expressions in \Cref{prop:data-level} provide insight into the structure of MTIF, computing them directly requires matrix inversions involving large blocks of the full Hessian, which can be computationally expensive as the number of parameters per task or the number of tasks increases. To address this issue, numerous scalable approximations have been proposed for Hessian inverse to improve computational efficiency, including LiSSA~\citep{agarwal2017second}, EKFAC~\citep{Grosse2023-pu}, TracIn~\citep{pruthi2020estimating}, and TRAK~\citep{park2023trak}. These methods approximate influence scores without explicitly computing or inverting the full Hessian. 
Empirically, we find that integrating the computational tricks employed by TRAK into MTIF significantly reduces the computational costs while preserving the fine-grained insight of instance-level analysis. We refer the readers to \Cref{subsec:trak_details} for more details about this integration.

\paragraph{Extension to Task‑Level Relatedness.}
The proposed MTIF not only provides fine-grained insight into instance-level relatedness, but also naturally extends to measure task-level relatedness. Following the same principle as LOO, we define the task-level influence of task $l$ on task $k$ using the \emph{Leave‑One‑Task‑Out} (LOTO) effect:
\begin{equation}
\label{eq:LOO-influence-task-level}
\Delta_k^l = V_k\bigl(\hat{\theta}_k, \hat\gamma; D_k^v\bigr)
- V_k\bigl(\hat{\theta}_k^{(-l)}, \hat\gamma^{(-l)}; D_k^v\bigr),
\end{equation}
where $(\hat{\theta}_1^{(-l)}, \cdots, \hat{\theta}_K^{(-l)}, \hat{\gamma}^{(-l)})$ is the minimizer of Eq.~(\ref{eq:data-level-objective}) after excluding all the data from task \(l\). 
Analogous to instance‑level MTIF, we approximate \(\Delta_k^l\) by
\begin{equation}
\label{eq:task-level-influence-score}
\mathrm{MTIF}_{\mathrm{task}}(l;k)
:= \left.\frac{\partial}{\partial\tilde\sigma_l}
    V_k\!\bigl(\hat{\theta}_k(\tilde{\boldsymbol\sigma}), \hat\gamma(\tilde{\boldsymbol\sigma}) ; D_k^v\bigr)
\right|_{\tilde{\boldsymbol\sigma}=\mathbf1},
\end{equation}
where
\[
\hat{\boldsymbol{w}}(\tilde{\boldsymbol\sigma})
= \arg\min_{\boldsymbol{w}}
  \sum_{j=1}^K \tilde\sigma_j
  \Bigl[\frac{1}{n_j}\sum_{i=1}^{n_j}\ell_{ji}(\theta_j,\gamma)
    + \Omega_j(\theta_j,\gamma)\Bigr],
\quad
\tilde{\boldsymbol\sigma}\in\mathbb R^K.
\]
Here, \(\mathrm{MTIF}_{\mathrm{task}}(l;k)\) captures the instantaneous change in task \(k\)’s validation loss when the overall weight on task \(l\) is infinitesimally perturbed in the joint objective. The analytical form for $\mathrm{MTIF}_{\mathrm{task}}(l;k)$ is provided in Appendix~\ref{sec:mtif-task-appendix}. It turns out that the task-level influence can be interpreted as a sum of instance-level influence scores over all data points in task $l$, with additional terms arising from $\sigma$-weighted regularization.

\section{Experiments}\label{sec:exp}

In this section, we validate the proposed MTIF through two sets of experiments. In \Cref{subsec:instance-level-approx}, we evaluate the quality of MTIF in terms of approximating model retraining. In \Cref{subsec:application-data-section}, we further assess the practical utility of MTIF for improving MTL performance via data selection.

\subsection{Retraining Approximation Quality} \label{subsec:instance-level-approx}
We first evaluate how well MTIF approximates the gold-standard LOO effects obtained through brute-force retraining. Given the high computational cost of repeated model retrainings needed for evaluation, we conduct this evaluation on two relatively small-scale datasets.

\emph{Synthetic Dataset.} This dataset consists of \(10\) tasks, each with $200$ samples \((x_{ki},y_{ki})\) split equally into training and test sets. Inputs \(x_{ki}\) are drawn independently from \(\mathcal{N}(0,I_d)\) with \(d=50\), and responses are generated using
$
y_{ki} = x_{ki}^\top \theta_k^\star + \epsilon_{ki}$, where $\epsilon_{ki}\sim\mathcal{N}(0,1)$.
Each task-specific coefficient \(\theta_k^\star\) is obtained by perturbing a shared vector \(\beta^\star = 2e_1\), where \(e_1\) is the first standard basis vector. The perturbation \(\delta_k\) is sampled uniformly from the sphere with radius \(\|\delta\|\). We fit the soft‑parameter‑sharing linear MTL model described in Example~\ref{eg:linear_reg} to estimate each \(\theta_k\). Additional details are provided in \Cref{subsubsec:synthetic_har_details}.

\emph{HAR Dataset.} The Human Activity Recognition (HAR) dataset~\citep{anguita2013public}, also referenced in \citet{duan2023adaptiverobustmultitasklearning}, contains inertial sensor recordings from 30 volunteers performing daily activities while carrying a smartphone on their waist. We treat each volunteer’s data as a separate binary-classification task with the objective of distinguishing the activity, ``sitting'', from all other activities. Preprocessing and partitioning details are provided in \Cref{subsubsec:synthetic_har_details}. We apply the soft‑parameter‑sharing logistic MTL model to learn task‑specific classifiers.

\paragraph{Instance-Level MTIF Approximation Quality.}
We compare the instance‐level MTIF in~\Cref{eq:data-level-influence-score} with the exact LOO effect in~\Cref{eq:LOO-influence-data-level-exact}.  The results, shown in \Cref{fig:LOO_linear_synth}, reveal a strong linear correlation
between the MTIF influence scores and the exact LOO scores across all scenarios. This
demonstrates that MTIF effectively approximates the LOO effect for both within-task and between-task influences on the synthetic and HAR datasets. 

\begin{figure}[!ht]
  \centering
  \begin{minipage}{0.23\linewidth}
    \centering
    \includegraphics[width=\linewidth]{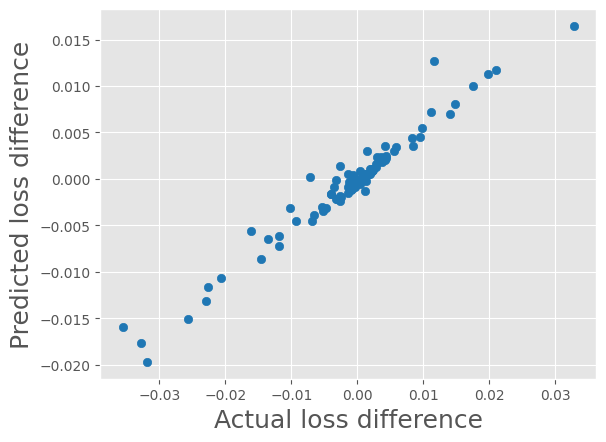}
  \end{minipage}\hfill
  \begin{minipage}{0.23\linewidth}
    \centering
    \includegraphics[width=\linewidth]{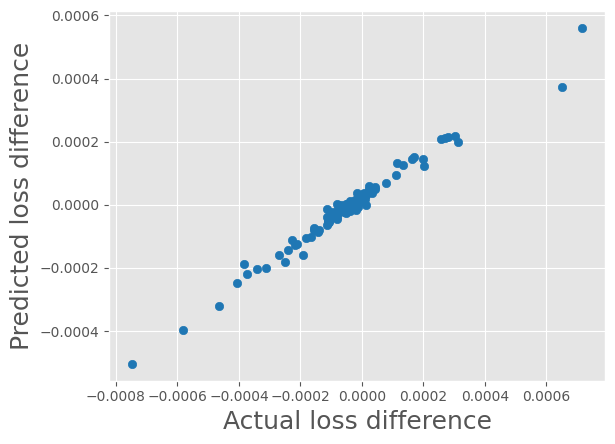}
  \end{minipage}\hfill
  \begin{minipage}{0.23\linewidth}
    \centering
    \includegraphics[width=\linewidth]{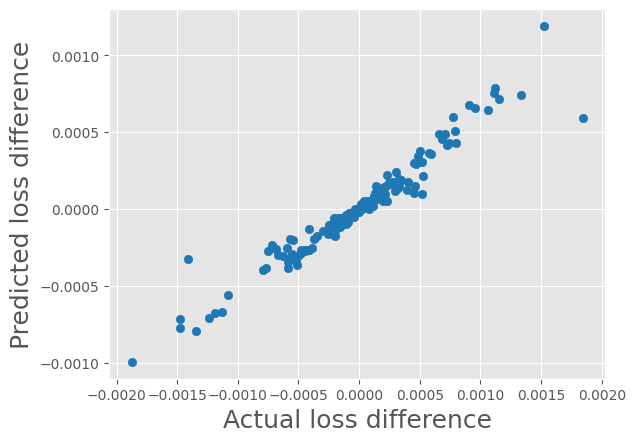}
  \end{minipage}\hfill
  \begin{minipage}{0.23\linewidth}
    \centering
    \includegraphics[width=\linewidth]{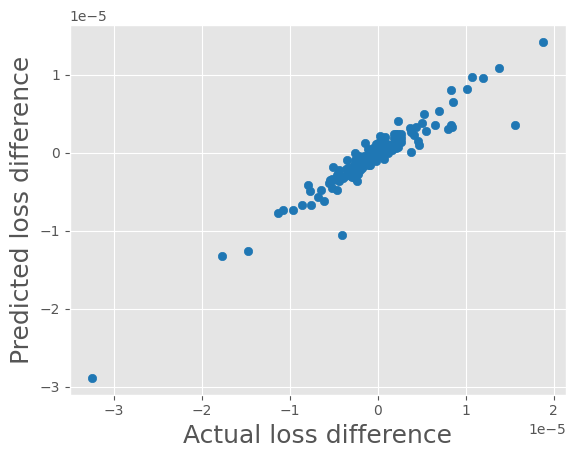}
  \end{minipage}
  \caption{Instance-level MTIF approximation quality on the synthetic and HAR datasets. The x-axis is the actual loss difference obtained by LOO retraining, and the y-axis is the predicted loss difference calculated by MTIF. The first two plots from the left show within-task and between-task results (in order) results on the synthetic dataset, while the other two plots present within-task and between-task results (in order) on the HAR dataset. The plots shown here reflect influences on a randomly picked test data point, while the trend holds more broadly on other test data points. The scatter points correspond to training data points in the first task of each dataset.}
\label{fig:LOO_linear_synth}
\end{figure}

\paragraph{Task‑Level MTIF Approximation Quality.}
We additionally compare our task-level influence scores \(\mathrm{MTIF}_{\mathrm{task}}\) in \Cref{eq:task-level-influence-score} with the exact LOTO scores in \Cref{eq:LOO-influence-task-level}.  In each experiment, we designate one task as the target task and treat the remaining tasks as source tasks. For each target task, we reserve 20\% of its data as a validation set, compute MTIF scores for all source tasks, and obtain LOTO scores by retraining the model without each source task. We then measure the Spearman correlation between the two sets of scores, repeating the experiment for each task as the target task.

On both synthetic and HAR datasets, MTIF\(_{\text{task}}\) achieves high Spearman correlation with the ground‑truth LOTO scores, indicating reliable task‑level relatedness estimation. Moreover, MTIF\(_{\text{task}}\) outperforms two popular baselines—Cosine Similarity~\citep{azorin2023itsmatchbenchmark} and TAG~\citep{fifty2021efficiently}—in terms of correlation with LOTO. Due to page limits, we only show the results on the synthetic dataset in Table~\ref{tab:LOTO-linear-synth} and refer the readers for the results on the HAR dataset to \Cref{subsubsec:additional_loo_approx_results}.

\begin{table}[!htbp]
\small
\centering
\caption{Average Spearman correlation coefficients between MTIF\(_\text{task}\) and LOTO scores across $5$ random seeds on the synthetic dataset. Error bars
represent the standard error of the mean.}
\label{tab:LOTO-linear-synth}
\begin{tabular}{c c c c c}
\toprule
Task 1 & Task 2 & Task 3 & Task 4 & Task 5 \\ 
0.84 $\pm$ 0.05 & 0.72 $\pm$ 0.05 & 0.74 $\pm$ 0.11 & 0.81 $\pm$ 0.05 & 0.71 $\pm$ 0.09 \\ \midrule
Task 6 & Task 7 & Task 8 & Task 9 & Task 10 \\ 
0.74 $\pm$ 0.04 & 0.74 $\pm$ 0.07 & 0.84 $\pm$ 0.03 & 0.74 $\pm$ 0.03 & 0.65 $\pm$ 0.07 \\ \bottomrule
\end{tabular}
\end{table}

\subsection{Improving MTL via Data Selection} \label{subsec:application-data-section}

We further evaluate the utility of the proposed MTIF for improving MTL performance through data selection. While most existing MTL research focuses on task-level relatedness, the instance-level relatedness estimated by MTIF offers a unique opportunity to improve MTL performance by identifying and removing training samples that negatively impact the model.

\paragraph{MTIF‑Guided Data Selection.} 
Based on the MTL model trained on the full dataset, we first calculate the MTIF score $\operatorname{MTIF}(i, l; k)$ for each training sample $i$ in each task $l$ with respect to each target task $k$. We then rank the training samples by their overall influence $\sum_{k} \operatorname{MTIF}(i, l; k)$, and remove a fraction of the worst training samples. The removal ratio is a hyperparameter tuned on a held‐out subset. Finally, we retrain the model on the selected data subset. 

\paragraph{Datasets.}
We evaluate MTIF‑guided data selection on standard MTL benchmark datasets.

\emph{CelebA dataset.}
CelebA~\citep{liu2015faceattributes} comprises over 200,000 face images annotated with 40 binary attributes and is a standard benchmark in MTL research~\citep{fifty2021efficiently}. Following \citet{fifty2021efficiently}, we select 10 attributes as separate binary classification tasks for MTL. 

\emph{Office‑31 and Office‑Home datasets.}
Office‑31~\citep{saenko2010adapting} comprises three domains---Amazon, DSLR, and Webcam---each defining a 31‑category classification task, with a total of 4,110 labeled images. Office‑Home~\citep{venkateswara2017deep} contains four domains---Artistic (Art), Clip Art, Product, and Real‑World---each with 65 object categories, totaling 15,500 labeled images. Following \citet{lin2023libmtl}, we treat each domain as a task for MTL.

\paragraph{Baseline Methods.} 
We compare MTIF‑guided data selection against state‑of‑the‑art MTL methods as baselines, including CAGrad~\citep{liu2021conflict}, Uncertainty Weighting (UW)~\citep{kendall2018multi}, 
Random Loss Weighting (RLW)~\citep{lin2022reasonableeffectivenessrandomweighting}, 
STCH~\citep{lin2024smooth}, GradNorm~\citep{chen2018gradnorm},  
DB‑MTL~\citep{lin2023dual}, 
ExcessMTL~\citep{he2024robust},
and PCGrad~\citep{yu2020gradient}. The vanilla MTL model is denoted as EW (Equal Weight) following the convention in \citet{lin2023libmtl}. In contrast to our data‑selection approach, these approaches typically mitigate negative transfer in MTL by adaptively changing gradients, task weightings, or loss scales during training. In principle, our approach can also be used in combination with these methods.

\begin{table}[!b]
\centering
\small
\caption{Test accuracy of different MTL methods averaged over tasks on the CelebA, Office-Home, and Office-31 datasets. The experiments are repeated for 5 random seeds. Error bars represent the standard error of the mean across the random seeds. The left three columns correspond to the original datasets while the right two columns correspond to the Office-31 dataset with respectively 10\% and 20\% corruption. The best result in each column is highlighted in \textbf{bold}, while the second-best result is highlighted with \underline{underline}.}
\label{tab:data_selection_nn}
 \begin{tabular}{l|ccc|cc}
\toprule
\textbf{Method} & \textbf{CelebA} & \textbf{Office-Home} & \textbf{Office-31} & \textbf{10\% Corrupt} & \textbf{20\% Corrupt} \\
\midrule
EW & 79.16 $\pm$ 0.05 & 78.38 $\pm$ 0.11 & 91.66 $\pm$ 0.26 & 88.01 $\pm$ 0.16 & 82.28 $\pm$ 0.85 \\
CAGrad & 79.46 $\pm$ 0.08 & 78.63 $\pm$ 0.13 & 91.74 $\pm$ 0.05 & 88.02 $\pm$ 0.22 & 82.09 $\pm$ 0.91 \\
UW & 79.01 $\pm$ 0.08 & 78.34 $\pm$ 0.10 & 91.87 $\pm$ 0.16 & 87.52 $\pm$ 0.14 & 82.01 $\pm$ 0.35 \\
RLW & 79.17 $\pm$ 0.04 & 78.46 $\pm$ 0.05 & 92.00 $\pm$ 0.13 & \underline{89.37 $\pm$ 0.38} & 80.85 $\pm$ 0.74 \\
STCH & 79.26 $\pm$ 0.06 & 78.18 $\pm$ 0.15 & 93.19 $\pm$ 0.08 & 88.80 $\pm$ 0.34 & 81.88 $\pm$ 0.42 \\
GradNorm & 79.24 $\pm$ 0.06 & 78.55 $\pm$ 0.12 & 91.95 $\pm$ 0.12 & 87.67 $\pm$ 0.23 & \underline{82.82 $\pm$ 0.71} \\
DB-MTL & \underline{79.68 $\pm$ 0.07} & \underline{78.70 $\pm$ 0.10} & \underline{93.41 $\pm$ 0.10} & 89.07 $\pm$ 0.26 & 82.29 $\pm$ 0.16 \\
ExcessMTL & 79.14 $\pm$ 0.07 & 78.47 $\pm$ 0.17 & 91.34 $\pm$ 0.32 & 88.46 $\pm$ 0.08 & 82.01 $\pm$ 0.81 \\
PCGrad & 79.02 $\pm$ 0.09 & 78.33 $\pm$ 0.08 & 91.88 $\pm$ 0.02 & 88.32 $\pm$ 0.32 & 82.12 $\pm$ 0.87 \\
\midrule
MTIF (Ours) & \textbf{79.94 $\pm$ 0.04} & \textbf{79.39 $\pm$ 0.04} & \textbf{93.60 $\pm$ 0.01} & \textbf{89.73 $\pm$ 0.48} & \textbf{83.85 $\pm$ 0.29} \\
\bottomrule
\end{tabular}
\end{table}

\paragraph{Experimental Setups.} We evaluate our method and the baselines in two experimental setups. The first one follows the standard MTL experimental setup using the three benchmark datasets. In the second setup, we aim to highlight the heterogeneity of instance-level relatedness---specifically, that different data points from a task, rather than the entire task, may differentially affect the performance on another task. To simulate this effect, we introduce noise by randomly corrupting 10\% and 20\% of the labels among the training samples in the Office‑31 dataset. These corrupted training samples, regardless which task they come from, are expected to be harmful to all tasks, thereby inducing heterogeneity in the instance-level influences.

\paragraph{Experimental Results: Accuracy.}
We report the average test accuracy of different methods in Table~\ref{tab:data_selection_nn}.
Our method (MTIF), which refers to the MTL model trained on the selected dataset guided by MTIF, achieves the highest average accuracy in all settings, consistently outperforming baseline methods. More concretely, the left three columns correspond to the three benchmark datasets without corruptions, while the right two columns correspond to the Office-31 dataset with 10\% and 20\% corruptions. In comparison to the original version of Office-31 dataset, the performance gap between our method and the second-best method becomes larger as the percentage of corruption becomes larger, indicating our method may better handle the more fine-grained heterogeneity in the task relatedness by explicitly accounting for instance-level relatedness.

\begin{wraptable}[15]{r}{0.5\textwidth}
\vspace{-1\intextsep}
\small
    \centering
        \caption{End-to-end runtime (in seconds) of different MTL methods on the Office-31 dataset.}
        \label{tab:runtime_nn}
        \begin{tabular}{l|r}
            \toprule
            \textbf{Method} & \textbf{Runtime (s)} \\
            \midrule
            EW        & 527.55 \(\pm\) 2.37 \\
            CAGrad    & 1,121.79 \(\pm\) 2.69 \\
            UW        & 592.97 \(\pm\) 2.86\\
            RLW       & 466.30 \(\pm\) 2.84\\
            STCH      & 748.15 \(\pm\) 0.01\\
            GradNorm  & 937.78 \(\pm\) 1.94 \\
            DB-MTL   & 936.48 \(\pm\) 0.02\\
            ExcessMTL & 1,386.13 \(\pm\) 1.62\\
            PCGrad    & 974.22 \(\pm\) 2.33\\
            \midrule
            MTIF (Ours)  & 1,281.69 \(\pm\) 0.34 \\
            \bottomrule
        \end{tabular}
\end{wraptable}

\paragraph{Experimental Results: End-to-End Runtime.} We further compare our method with baseline MTL methods in terms of the end-to-end runtime. Our MTIF‑guided data selection requires two model training passes (the initial training on the full dataset and the retraining on the selected dataset) and one evaluation pass to compute the MTIF scores. In contrast, most baseline methods perform a single model training run but incur per‑step overhead to adjust gradients, task weightings, or loss scales during teh training. In Table~\ref{tab:runtime_nn}, we present the end‑to‑end total runtime of different methods for a fair comparison. Overall, all methods exhibit comparable end-to-end runtimes, remaining within the same order of magnitude. This suggests that although MTIF-guided data selection adopts a fundamentally different approach from most existing MTL methods, its performance gains can be achieved with negligible additional computational cost. 

\section{Conclusion and Discussion}\label{sec:conclusion}
This work establishes a novel connection between data attribution and multitask learning (MTL), and introduces the MultiTask Influence Function (MTIF), a novel approach that adapts influence function-based data attribution to the MTL setting. MTIF enables fine-grained, instance-level quantification of how individual training samples from one task affect performance on another, offering a new perspective on measuring task relatedness. 

Empirically, our method achieves two key outcomes. First, we show that MTIF scores closely approximate the gold-standard leave-one-out retraining effects at both the instance and task levels. Second, we demonstrate that MTIF-guided data selection consistently improves model performance across standard MTL benchmarks, particularly in settings with heterogeneous data quality within each task, while incurring only modest additional computational overhead.

\paragraph{Limitations and Future Directions.} As an initial step toward adapting data attribution methods to multitask learning, our empirical study focuses on standard computer vision MTL benchmarks. Future work could explore extending MTIF to more complex tasks and architectures, such as those involving large language models. Additionally, influence function methods are based on first-order approximations of how infinitesimal changes in training data weights affect model performance. As a result, a potential limitation of MTIF is that its task-level relatedness measure, MTIF\(_\text{task}\)---which approximates the effect of removing all data from a task---may become less accurate in approximating the LOTO effects when the number of data points per task is very large. In such cases, however, the heterogeneity within each task may be more evident, and the more fine-grained, instance-level effects may offer more meaningful insights than the LOTO effects. Better understanding the relationship and trade-offs between LOO and LOTO effects could be an interesting future direction.

\section*{Acknowledgement}
WT was partially supported by NSF DMS-2412853. The views and conclusions expressed in this paper are solely those of the authors and do not necessarily reflect the official policies or positions of the supporting agency.

\bibliography{reference}
\bibliographystyle{plainnat}

\clearpage
\appendix

\renewcommand{\thetheorem}{\thesection.\arabic{theorem}}
\renewcommand{\thelemma}{\thesection.\arabic{lemma}}
\renewcommand{\theequation}{\thesection.\arabic{equation}}
\setcounter{theorem}{0} 
\setcounter{lemma}{0} 
\setcounter{equation}{0} 

\newpage

\section{Lemmas and Proofs}\label{sec:lemma}

The first lemma describe the structure of the Hessian matrices for instance-level inference.

\begin{lemma}[Hessian Matrix Structure for Data-Level Inference] \label{lemma:Hessian_data_level}
Let \(H(\boldsymbol{w}, \boldsymbol{\sigma})\) be the Hessian matrix of data-level \(\boldsymbol{\sigma}\)-weighted objective (\ref{eq:data-level-objective}) with respect to \(\boldsymbol{w}\), i.e., \(H(\boldsymbol{w}, \boldsymbol{\sigma}) = \frac{\partial^2 \mathcal{L}(\boldsymbol{w}, \boldsymbol{\sigma})}{\partial \boldsymbol{w} \partial \boldsymbol{w}^\top}\), then we have
    \begin{align*}
        H(\boldsymbol{w}, \boldsymbol{\sigma}) = \begin{pmatrix}
            H_{1,1} & \cdots & 0 & H_{1, K+1} \\
            \vdots & \ddots & \vdots & \vdots \\
            0 & \cdots & H_{K,K} & H_{K, K+1} \\
            H_{K+1,1} & \cdots & H_{K+1, K} & H_{K+1, K+1}
        \end{pmatrix},
    \end{align*}
    where
    \begin{align*}
        H_{kk} &= \sum_{i=1}^{n_k} \sigma_{ki} \frac{\partial^2 \ell_{ki}(\theta_k, \gamma)}{\partial \theta_k \partial \theta_k^{\top}} + \frac{\partial^2 \Omega_k(\theta_k, \gamma)}{\partial \theta_k \partial \theta_k^{\top}} \in \mathbb{R}^{d_k \times d_k} \quad \text{for } 1 \leq k \leq K, \\
        H_{kl} &= 0 \in \mathbb{R}^{d_k \times d_l} \quad \text{for } 1 \leq k, l \leq K \text{ and } k \neq l, \\
        H_{K+1, k}^\top = H_{k, K+1} &= \sum_{i=1}^{n_k} \sigma_{ki} \frac{\partial^2 \ell_{ki}(\theta_k, \gamma)}{\partial \theta_k \partial \gamma^{\top}} + \frac{\partial^2 \Omega_k(\theta_k, \gamma)}{\partial \theta_k \partial \gamma^{\top}} \in \mathbb{R}^{d_k \times p} \quad \text{for } 1 \leq k \leq K, \\
        H_{K+1, K+1} &= \sum_{k=1}^K \sum_{i=1}^{n_k} \sigma_{ki} \frac{\partial^2 \ell_{ki}(\theta_k, \gamma)}{\partial \gamma \partial \gamma^{\top}} + \sum_{k=1}^K \frac{\partial^2 \Omega_k(\theta_k, \gamma)}{\partial \gamma \partial \gamma^{\top}} \in \mathbb{R}^{p \times p}.
    \end{align*}
\end{lemma}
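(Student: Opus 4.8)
The plan is to exploit the additive, partially separable structure of the $\boldsymbol{\sigma}$-weighted objective in~\Cref{eq:data-level-objective}. First I would write $\mathcal{L}(\boldsymbol{w}, \boldsymbol{\sigma}) = \sum_{k=1}^K \mathcal{L}_k(\theta_k, \gamma; \boldsymbol{\sigma})$, where $\mathcal{L}_k(\theta_k, \gamma; \boldsymbol{\sigma}) := \frac{1}{n_k}\sum_{i=1}^{n_k}\sigma_{ki}\ell_{ki}(\theta_k,\gamma) + \Omega_k(\theta_k,\gamma)$; the crucial observation is that $\mathcal{L}_k$ depends on the task-specific parameters only through $\theta_k$, together with the shared $\gamma$. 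Organizing $\boldsymbol{w} = (\theta_1,\dots,\theta_K,\gamma)$ into $K+1$ coordinate blocks then induces the claimed $(K+1)\times(K+1)$ block partition of $H = \partial^2\mathcal{L}/\partial\boldsymbol{w}\,\partial\boldsymbol{w}^\top$, and it remains to identify each block.

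Next I would compute the blocks by differentiating term by term and using linearity of differentiation. For the off-diagonal task blocks $H_{kl}$ with $k\neq l$: the summand $\partial^2\mathcal{L}_m/\partial\theta_k\partial\theta_l^\top$ can be nonzero only if $\mathcal{L}_m$ depends on both $\theta_k$ and $\theta_l$, but each $\mathcal{L}_m$ couples only the single task-specific block $\theta_m$ to $\gamma$, so the sum over $m$ collapses to zero and $H_{kl}=0$. For the diagonal block $H_{kk}$, only $\mathcal{L}_k$ contributes, and differentiating $\frac{1}{n_k}\sum_i\sigma_{ki}\ell_{ki}+\Omega_k$ twice in $\theta_k$ yields the stated expression (up to the $1/n_k$ normalization carried through consistently). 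The mixed block $H_{k,K+1}=\partial^2\mathcal{L}/\partial\theta_k\partial\gamma^\top$ again receives a contribution only from $\mathcal{L}_k$, since no other summand involves $\theta_k$, giving the cross-partial of the $k$-th loss and regularizer; and $H_{K+1,K+1}=\partial^2\mathcal{L}/\partial\gamma\partial\gamma^\top$ receives a contribution from every $\mathcal{L}_k$, producing the stated sum over $k$. Finally, $H_{K+1,k}=H_{k,K+1}^\top$ follows from the symmetry of second mixed partials under the twice-differentiability hypothesis (Schwarz's theorem).

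I do not anticipate a substantive obstacle: the lemma is a direct consequence of the partial separability of the objective. The only points requiring care are (i) keeping track of the $1/n_k$ factors and the regularizer's second derivatives consistently across all blocks, and (ii) articulating cleanly why the cross-task blocks vanish—namely that each summand $\mathcal{L}_k$ couples at most one task-specific parameter block to the shared block and never two distinct task-specific blocks.
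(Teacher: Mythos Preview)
Your proposal is correct and is exactly the natural direct computation; the paper in fact states this lemma without proof, treating it as immediate from the additive structure of the objective, so your block-by-block differentiation is precisely what is implicitly assumed. Your flag about the $1/n_k$ factor is also well taken: as written, the lemma's block expressions omit the $1/n_k$ present in \Cref{eq:data-level-objective}, so you should simply carry that normalization through consistently (it does not affect the block structure or any downstream argument).
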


\begin{lemma}[Influence Scores for Instance-Level Analysis]
\label{lemma:data-level-influence-functions}
    Assume that the objective \(\mathcal{L}(\boldsymbol{w}, \boldsymbol{\sigma})\) is twice differentiable and strictly convex in \(\boldsymbol{w}\). Then, \( \hat{\boldsymbol{w}}(\boldsymbol{\sigma}) = \arg\min_{\boldsymbol{w}} \mathcal{L}(\boldsymbol{w}, \boldsymbol{\sigma}) \) satisfies \( \frac{\partial \mathcal{L}(\hat{\boldsymbol{w}}(\boldsymbol{\sigma}), \boldsymbol{\sigma})}{\partial \boldsymbol{w}} = 0 \). Moreover, we have:
    \begin{equation*}
        \frac{\partial \hat{\boldsymbol{w}}(\boldsymbol{\sigma})}{\partial \sigma_{ki}} = - H(\hat{\boldsymbol{w}}(\boldsymbol{\sigma}), \boldsymbol{\sigma})^{-1} \left(0, \cdots, 0, \underset{k\text{-th block}}{\frac{\partial \ell_{ki}}{\partial \theta_k^\top}}, 0, \cdots, 0, \underset{(K+1)\text{-th block}}{\frac{\partial \ell_{ki}}{\partial \gamma^\top}}\right)^\top,
    \end{equation*}
    where \(H(\boldsymbol{w}, \boldsymbol{\sigma}) \in \mathbb{R}^{(\sum_{k=1}^K d_k + p) \times (\sum_{k=1}^K d_k + p)}\) is the Hessian matrix of \(\mathcal{L}(\boldsymbol{w}, \boldsymbol{\sigma})\) with respect to \(\boldsymbol{w}\).
\end{lemma}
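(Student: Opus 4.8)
\textbf{Proof proposal for Lemma~\ref{lemma:data-level-influence-functions}.}

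The plan is to prove the two claims in sequence: first the first-order optimality condition, then the implicit-function-theorem formula for $\partial\hat\vw/\partial\sigma_{ki}$. For the first part, I would invoke strict convexity and twice-differentiability of $\mathcal{L}(\cdot,\boldsymbol\sigma)$ in $\vw$: the minimizer $\hat\vw(\boldsymbol\sigma)$ is the unique stationary point, so $\nabla_{\vw}\mathcal{L}(\hat\vw(\boldsymbol\sigma),\boldsymbol\sigma)=0$ holds identically in $\boldsymbol\sigma$. This is essentially immediate from first-order conditions for unconstrained convex minimization; no real obstacle here.

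For the second part, the strategy is the standard influence-function derivation via the implicit function theorem. Starting from the identity $\nabla_{\vw}\mathcal{L}(\hat\vw(\boldsymbol\sigma),\boldsymbol\sigma)\equiv 0$, I would differentiate both sides with respect to $\sigma_{ki}$. By the chain rule this yields
\begin{equation*}
\frac{\partial^2\mathcal{L}}{\partial\vw\,\partial\vw^\top}\bigg|_{\hat\vw(\boldsymbol\sigma)}\cdot\frac{\partial\hat\vw(\boldsymbol\sigma)}{\partial\sigma_{ki}} + \frac{\partial}{\partial\sigma_{ki}}\nabla_{\vw}\mathcal{L}(\vw,\boldsymbol\sigma)\bigg|_{\hat\vw(\boldsymbol\sigma)} = 0.
\end{equation*}
The first factor is precisely $H(\hat\vw(\boldsymbol\sigma),\boldsymbol\sigma)$, which is invertible because strict convexity forces the Hessian to be positive definite. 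The remaining task is to compute the explicit form of the second term, $\partial_{\sigma_{ki}}\nabla_{\vw}\mathcal{L}$. From the definition of $\mathcal{L}(\vw,\boldsymbol\sigma)$ in~\Cref{eq:data-level-objective}, the only place $\sigma_{ki}$ appears is in the single summand $\tfrac{1}{n_k}\sigma_{ki}\ell_{ki}(\theta_k,\gamma)$; the regularizers $\Omega_j$ carry no $\sigma$-dependence. Hence $\partial_{\sigma_{ki}}\mathcal{L} = \tfrac{1}{n_k}\ell_{ki}(\theta_k,\gamma)$, and taking a further gradient in $\vw$ gives a vector that is nonzero only in the $k$-th task-specific block (equal to $\tfrac{1}{n_k}\partial\ell_{ki}/\partial\theta_k$) and the shared block (equal to $\tfrac{1}{n_k}\partial\ell_{ki}/\partial\gamma$), since $\ell_{ki}$ depends on $\vw$ only through $\theta_k$ and $\gamma$. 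Solving the linear system then gives $\partial\hat\vw/\partial\sigma_{ki} = -H^{-1}$ applied to that block-sparse vector, which is the claimed formula. (I note the statement as written appears to absorb the $1/n_k$ factor; I would either keep it explicit or remark that it is folded into the shorthand $\ell_{ki}$, consistent with how the subsequent proposition is stated.)

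The main subtlety — rather than obstacle — is justifying differentiability of the map $\boldsymbol\sigma\mapsto\hat\vw(\boldsymbol\sigma)$ in the first place, which the implicit function theorem supplies: since $\nabla_{\vw}\mathcal{L}$ is $C^1$ (as $\mathcal{L}$ is twice differentiable) and its Jacobian in $\vw$, namely $H$, is nonsingular at $(\hat\vw(\boldsymbol\sigma),\boldsymbol\sigma)$, the implicit function theorem yields a unique $C^1$ local solution $\hat\vw(\boldsymbol\sigma)$ whose derivative is exactly the expression obtained above. I would state this invocation explicitly at the start of the second part so that the chain-rule differentiation is rigorously licensed. Everything else is bookkeeping: identifying which blocks of the gradient-of-gradient are nonzero, which follows directly from the separable-plus-coupling structure of~\Cref{eq:data-level-objective} already recorded in Lemma~\ref{lemma:Hessian_data_level}.
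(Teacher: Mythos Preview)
Your proposal is correct and follows essentially the same approach as the paper: the paper's proof is a one-line appeal to ``the classical influence function framework as outlined in \cite{koh2017understanding},'' and what you have written is precisely that derivation spelled out (first-order optimality, then implicit differentiation of the stationarity condition, then identification of the block-sparse mixed partial). Your remark about the missing $1/n_k$ factor is a valid observation about the statement's notation rather than a flaw in your argument.
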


\begin{proof}
    The result is obtained by applying the classical influence function framework as outlined in \cite{koh2017understanding}.
\end{proof}

The following two lemmas provide tools for verifying the invertibility of the Hessian matrix and calculating its inverse.

\begin{lemma}[Invertibility of Hessian] \label{lemma:Hessian_invertibility}
    If $H_{kk}$ is invertible for $1 \leq k \leq K$, define 
    \begin{equation}
        N := H_{K+1, K+1} - \sum_{k=1}^K H_{K+1,k} H_{kk}^{-1} H_{k, K+1} \in \mathbb{R}^{p \times p}.
    \end{equation}
    If $N$ is also invertible, then $H$ is invertible.
\end{lemma}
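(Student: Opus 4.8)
The plan is to recognize $H$ as a symmetric block \emph{arrowhead} matrix and then apply the textbook Schur-complement criterion for invertibility. First I would re-partition the $(K+1)\times(K+1)$ block matrix of \Cref{eq:Hessian_block_structure} as a $2\times 2$ block matrix
\[
H = \begin{pmatrix} A & B \\ B^\top & H_{K+1,K+1}\end{pmatrix},
\]
where $A = \operatorname{diag}(H_{1,1},\dots,H_{K,K})$ collects the task-specific diagonal blocks (the off-diagonal task blocks $H_{k,l}$ with $k\neq l$ all vanish by Lemma~\ref{lemma:Hessian_data_level}) and $B = (H_{1,K+1}^\top,\dots,H_{K,K+1}^\top)^\top$ stacks the shared-coupling blocks. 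Symmetry of the Hessian gives $H_{K+1,k} = H_{k,K+1}^\top$, so this $2\times 2$ form is genuinely the one displayed above.

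Next I would note that $A$ is invertible: being block-diagonal with each $H_{k,k}$ invertible by hypothesis, its inverse is $A^{-1} = \operatorname{diag}(H_{1,1}^{-1},\dots,H_{K,K}^{-1})$. Because $A^{-1}$ is itself block-diagonal, the product $B^\top A^{-1} B$ decouples into a sum of per-task contributions, so the Schur complement of $A$ in $H$ is exactly
\[
H_{K+1,K+1} - B^\top A^{-1} B = H_{K+1,K+1} - \sum_{k=1}^K H_{K+1,k}\,H_{k,k}^{-1}\,H_{k,K+1} = N.
\]

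Finally, I would invoke the block $LDL^\top$ factorization
\[
H = \begin{pmatrix} I & 0 \\ B^\top A^{-1} & I \end{pmatrix}
     \begin{pmatrix} A & 0 \\ 0 & N \end{pmatrix}
     \begin{pmatrix} I & A^{-1}B \\ 0 & I \end{pmatrix},
\]
which can be verified by direct multiplication. The two outer factors are unipotent and hence invertible, and the middle factor is invertible precisely when both $A$ and $N$ are. Since $A$ is invertible (shown above) and $N$ is invertible by hypothesis, $H$ is invertible, completing the proof; equivalently one can read off $\det H = \det A \cdot \det N \neq 0$. I do not expect a real obstacle here: the only care needed is the bookkeeping that the off-diagonal task blocks vanish so that $A$ is block-diagonal, and that the sum defining $N$ matches $B^\top A^{-1} B$ term by term — after that it is the standard Schur-complement argument.
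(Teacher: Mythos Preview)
Your proposal is correct and takes essentially the same approach as the paper: both partition $H$ into the same $2\times 2$ block form with $A=\operatorname{diag}(H_{11},\dots,H_{KK})$ and identify $N$ as the Schur complement $D-CA^{-1}B$. The only cosmetic difference is that you conclude invertibility via the $LDL^\top$ factorization (equivalently $\det H=\det A\cdot\det N$), whereas the paper appeals directly to the block-inverse formula, which simultaneously yields the explicit inverse needed for the companion lemma.
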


\begin{lemma}[Hessian Inverse] \label{lemma:Hessian_inverse}
    Let $\left[H^{-1}\right]_{k,l}$ denote the $(k,l)$ block of the inverse Hessian $H(\boldsymbol{w}, \boldsymbol{\sigma})^{-1}$. Then for $1 \leq k, l \leq K$,
    \begin{align*}
        \begin{array}{llll}
            \left[H^{-1}\right]_{k,l} & = \mathbf{1}(k=l) \cdot H_{kk}^{-1} + H_{kk}^{-1} H_{k, K+1} N^{-1} H_{K+1, l} H_{ll}^{-1}, & \text{for } 1 \leq k, l \leq K, \\
            \left[H^{-1}\right]_{k, K+1} & = -H_{kk}^{-1} H_{k, K+1} N^{-1}, & \text{for } 1 \leq k \leq K, \\
            \left[H^{-1}\right]_{K+1, K+1} & = N^{-1}.
        \end{array}
    \end{align*}
\end{lemma}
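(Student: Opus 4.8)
The plan is to treat $H$ as a bordered block-diagonal (block ``arrowhead'') matrix and invert it with the classical Schur-complement formula. First I would regroup the blocks of $H$ into a $2\times 2$ block form: let $A := \mathrm{diag}(H_{11},\dots,H_{KK})$ be the block-diagonal matrix formed by the first $K$ diagonal blocks, let $B$ be the vertical stack of the border blocks $H_{1,K+1},\dots,H_{K,K+1}$, and let $D := H_{K+1,K+1}$, so that by Lemma~\ref{lemma:Hessian_data_level} (in particular $H_{K+1,k}=H_{k,K+1}^\top$ and $H_{kl}=0$ for $k\ne l$ with $k,l\le K$) we have $H=\begin{pmatrix}A & B\\ B^\top & D\end{pmatrix}$. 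Since each $H_{kk}$ is invertible (the standing hypothesis inherited from Lemma~\ref{lemma:Hessian_invertibility}) and $A$ is block diagonal, $A$ is invertible with $A^{-1}=\mathrm{diag}(H_{11}^{-1},\dots,H_{KK}^{-1})$; moreover the Schur complement of $A$ in $H$ is $D-B^\top A^{-1}B = H_{K+1,K+1}-\sum_{k=1}^K H_{K+1,k}H_{kk}^{-1}H_{k,K+1} = N$, which is invertible by assumption — the sum over tasks appears precisely because $A^{-1}$ is block diagonal, so $B^\top A^{-1}B$ decomposes blockwise.

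Next I would invoke the standard block-inverse identity
\[
H^{-1}=\begin{pmatrix} A^{-1}+A^{-1}BN^{-1}B^\top A^{-1} & -A^{-1}BN^{-1}\\ -N^{-1}B^\top A^{-1} & N^{-1}\end{pmatrix}
\]
and read off the three cases. The $(K+1,K+1)$ block is immediately $N^{-1}$. For $[H^{-1}]_{k,K+1}$ with $k\le K$, since $A^{-1}$ is block diagonal its $k$-th diagonal block acts on the $k$-th block of $B$, giving $-H_{kk}^{-1}H_{k,K+1}N^{-1}$. For $[H^{-1}]_{k,l}$ with $k,l\le K$: the term $A^{-1}$ contributes $\mathbf{1}(k=l)H_{kk}^{-1}$, and the $(k,l)$ block of $A^{-1}BN^{-1}B^\top A^{-1}$ is $H_{kk}^{-1}H_{k,K+1}\,N^{-1}\,H_{K+1,l}H_{ll}^{-1}$, which together give exactly the stated formula.

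To keep the argument self-contained I would then verify $HH^{-1}=I$ by direct block multiplication rather than citing the identity as a black box. The key simplification is that the $k$-th block row of $H$ (for $k\le K$) has only the two nonzero entries $H_{kk}$ and $H_{k,K+1}$, so the product collapses to $H_{kk}[H^{-1}]_{k,l}+H_{k,K+1}[H^{-1}]_{K+1,l}$; substituting the formulas, the $N^{-1}$-terms telescope and one recovers $\mathbf{1}(k=l)I$ in the task columns and $0$ in the border column. The bottom block row reduces to the defining relation $N = H_{K+1,K+1}-\sum_k H_{K+1,k}H_{kk}^{-1}H_{k,K+1}$, completing the check; as a byproduct this exhibits an explicit two-sided inverse, which also reproves Lemma~\ref{lemma:Hessian_invertibility}. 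I do not expect any genuine obstacle here — the proof is essentially bookkeeping — so the only things to watch are consistent use of the transpose relation $H_{K+1,k}=H_{k,K+1}^\top$ and tracking which block index of $A^{-1}$ or $B$ is active in each product.
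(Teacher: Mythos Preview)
Your proposal is correct and follows essentially the same approach as the paper: partition $H$ into a $2\times 2$ block with $A=\mathrm{diag}(H_{11},\dots,H_{KK})$, identify the Schur complement $D-B^\top A^{-1}B=N$, and apply the standard block-inverse/Woodbury formula before reading off the $(k,l)$ blocks. The only cosmetic difference is that the paper first writes the inverse with $(A-BD^{-1}C)^{-1}$ in the top-left and then invokes Woodbury to reach the form you wrote down directly; your additional $HH^{-1}=I$ verification is a nice self-contained check but not needed for the argument.
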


\begin{proof}[Proof of Lemma~\ref{lemma:Hessian_invertibility} and Lemma~\ref{lemma:Hessian_inverse}]
    Denote \[H=\left(\begin{array}{ll}
A & B \\
C & D
\end{array}\right),\]
where $A=\left(\begin{array}{ccc}
H_{11} & & 0 \\
& \ddots & \\
0 & & H_{K K}
\end{array}\right) \in \mathbb{R}^{(\sum_{k=1}^K n_k) \times (\sum_{k=1}^K n_k)}$, $B = C^\top = \left(\begin{array}{c}
H_{1, K+1} \\
\vdots \\
H_{K, K+1}
\end{array}\right) \in \mathbb{R}^{(\sum_{k=1}^K n_k) \times p}$, and $D= H_{K+1, K+1} \in \mathbb{R}^{p \times p}$.
Under the conditions, the matrices $H_{kk}$ for $1\le k \le K$ are invertible. Note that $A$ is a diagonal block matrix. It is also invertible and its inverse is given by 
\[A^{-1}=\left(\begin{array}{ccc}
H_{11}^{-1} & & \\
& \ddots & \\
& & H_{K K}^{-1}
\end{array}\right).\]
In addition, under the conditions, $ D-C A^{-1} B=  H_{K+1, K+1}-\sum_{k=1}^K H_{K+1,k} H_{k k}^{-1} H_{k, K+1}=N $ is invertible. Using the inverse formula for block matrix, we have 
\begin{equation}
    H^{-1}=\left(\begin{array}{ll}
\left(A-B D^{-1} C\right)^{-1} & -A^{-1} B\left(D-C A^{-1} B\right)^{-1} \\
-D^{-1} C\left(A - B D^{-1} C\right)^{-1} & \left(D-C A^{-1} B\right)^{-1}
\end{array}\right), \label{eq:H-inv-block}
\end{equation}
where the upper left block is equivalent to
\[\left(A-B D^{-1} C\right)^{-1} = A^{-1}+A^{-1} B\left(D-C A^{-1} B\right)^{-1} C A^{-1},\]
by using the Woodbury matrix identity.
Further, by expanding the RHS of Equation~(\ref{eq:H-inv-block}) in terms of the blocks in $H$, we can get the block-wise expression of $H^{-1}$.
In particular, for $1\le k, l \le K$, 
\begin{align*}
    \left[H^{-1}\right]_{k,l} & \equiv \left[\left(A-B D^{-1} C\right)^{-1}\right]_{k,l} = 1(k=l) \cdot H_{k k}^{-1} + \left[A^{-1} B\left(D-C A^{-1} B\right)^{-1} C A^{-1}\right]_{k l} \\
    & = 1(k=l) \cdot H_{k k}^{-1} + H_{k k}^{-1} H_{k, K+1} \cdot N^{-1} \cdot H_{K+1, l} H_{l l}^{-1}.
\end{align*}
Further, for $1\le k \le K$,
\[\left[H^{-1}\right]_{k, K+1} =  \left[H^{-1}\right]_{ K+1, k}^\top = H_{k k}^{-1} H_{k, K+1} N^{-1},\]
and
\[\left[H^{-1}\right]_{K+1, K+1} = N^{-1}.\]
\end{proof}
\section{MTIF for Task-Level Inference}\label{sec:mtif-task-appendix}
\setcounter{lemma}{0}  

We define task-level $\boldsymbol{\sigma}$-weighted objective to be:
\begin{equation} \label{eq:task-level-objective}
  \mathcal{L}(\boldsymbol{w}, \boldsymbol{\sigma}) = \sum_{j=1}^K \sigma_j\Bigl[
    \frac{1}{n_j}\sum_{i=1}^{n_j}\ell_{ji}(\theta_j,\gamma)
    + \Omega_j(\theta_j,\gamma)\Bigr],
\end{equation}
where $\boldsymbol{\sigma} \in \mathbb{R}^K$ is the vector of task-level weights. The first lemma describe the structure of the Hessian matrices for this task-level $\boldsymbol{\sigma}$-weighted objective.

\begin{lemma}[Hessian Matrix Structure for Task-Level Inference] \label{lemma:Hessian_task_level}
Let \(H(\boldsymbol{w}, \boldsymbol{\sigma})\) be the Hessian matrix of task-level \(\boldsymbol{\sigma}\)-weighted objective (\ref{eq:task-level-objective}) with respect to \(\boldsymbol{w}\), then
    \begin{align*}
        H(\boldsymbol{w}, \boldsymbol{\sigma}) = \begin{pmatrix}
            H_{1,1} & \cdots & 0 & H_{1, K+1} \\
            \vdots & \ddots & \vdots & \vdots \\
            0 & \cdots & H_{K,K} & H_{K, K+1} \\
            H_{K+1,1} & \cdots & H_{K+1, K} & H_{K+1, K+1}
        \end{pmatrix},
    \end{align*}
    where
    \begin{align*}
        H_{kk} &= \sigma_k \left[\sum_{i=1}^{n_k} \frac{\partial^2 \ell_{ki}(\theta_k, \gamma)}{\partial \theta_k \partial \theta_k^{\top}} + \frac{\partial^2 \Omega_k(\theta_k, \gamma)}{\partial \theta_k \partial \theta_k^{\top}}\right] \in \mathbb{R}^{d_k \times d_k} \quad \text{for } 1 \leq k \leq K, \\
        H_{kl} &= 0 \in \mathbb{R}^{d_k \times d_l} \quad \text{for } 1 \leq k, l \leq K \text{ and } k \neq l, \\
        H_{K+1, k}^\top = H_{k, K+1} &= \sigma_k \left[\sum_{i=1}^{n_k} \frac{\partial^2 \ell_{ki}(\theta_k, \gamma)}{\partial \theta_k \partial \gamma^{\top}} + \frac{\partial^2 \Omega_k(\theta_k, \gamma)}{\partial \theta_k \partial \gamma^{\top}}\right] \in \mathbb{R}^{d_k \times p} \quad \text{for } 1 \leq k \leq K, \\
        H_{K+1, K+1} &= \sum_{k=1}^K \sigma_k \left[\sum_{i=1}^{n_k} \frac{\partial^2 \ell_{ki}(\theta_k, \gamma)}{\partial \gamma \partial \gamma^{\top}} + \frac{\partial^2 \Omega_k(\theta_k, \gamma)}{\partial \gamma \partial \gamma^{\top}}\right] \in \mathbb{R}^{p \times p}.
    \end{align*}
\end{lemma}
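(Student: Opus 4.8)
The plan is to establish Lemma~\ref{lemma:Hessian_task_level} by a direct, block-by-block computation of the Hessian of the task-level weighted objective $\mathcal{L}(\boldsymbol{w},\boldsymbol{\sigma})$ in \Cref{eq:task-level-objective}. Since the stacked parameter $\boldsymbol{w}=(\theta_1,\dots,\theta_K,\gamma)$ partitions into $K$ task-specific blocks and one shared block, the Hessian $H(\boldsymbol{w},\boldsymbol{\sigma})=\partial^2\mathcal{L}/\partial\boldsymbol{w}\,\partial\boldsymbol{w}^\top$ automatically acquires a $(K+1)\times(K+1)$ block form, and the remaining work is to identify each block. Throughout I would write $\mathcal{L}$ as a finite sum $\sum_{j=1}^K \sigma_j\, L_j(\theta_j,\gamma)$, where $L_j(\theta_j,\gamma):=\tfrac{1}{n_j}\sum_{i=1}^{n_j}\ell_{ji}(\theta_j,\gamma)+\Omega_j(\theta_j,\gamma)$ depends on $\boldsymbol{w}$ only through $\theta_j$ and $\gamma$, and $\sigma_j$ is a scalar that factors out of differentiation.

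Concretely, I would proceed through the four block types. (i) Off-diagonal task blocks: for $k\ne l$, no single summand $\sigma_j L_j$ contains both $\theta_k$ and $\theta_l$, so $\partial^2\mathcal{L}/\partial\theta_k\,\partial\theta_l^\top=0$, giving $H_{kl}=0$. (ii) Diagonal task blocks: only the $j=k$ term depends on $\theta_k$, and $\sigma_k$ pulls out, so $H_{kk}=\sigma_k\,\partial^2 L_k/\partial\theta_k\,\partial\theta_k^\top=\sigma_k\big(\tfrac{1}{n_k}\sum_i \partial^2\ell_{ki}/\partial\theta_k\,\partial\theta_k^\top+\partial^2\Omega_k/\partial\theta_k\,\partial\theta_k^\top\big)$. (iii) Task--shared coupling blocks: again only the $j=k$ term involves $\theta_k$, so $\partial^2\mathcal{L}/\partial\theta_k\,\partial\gamma^\top=\sigma_k\big(\tfrac{1}{n_k}\sum_i\partial^2\ell_{ki}/\partial\theta_k\,\partial\gamma^\top+\partial^2\Omega_k/\partial\theta_k\,\partial\gamma^\top\big)=:H_{k,K+1}$, and $H_{K+1,k}=H_{k,K+1}^\top$ by equality of mixed partials (Schwarz), valid since $\mathcal{L}$ is twice differentiable. (iv) Shared--shared block: every summand depends on $\gamma$, so the contributions add over $k$, yielding $H_{K+1,K+1}=\sum_{k=1}^K\sigma_k\,\partial^2 L_k/\partial\gamma\,\partial\gamma^\top$. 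Collecting (i)--(iv) into the $(K+1)\times(K+1)$ array reproduces exactly the matrix in the statement.

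I do not expect a genuine obstacle here: the argument is routine differentiation of a finite sum, the only substantive idea being that task-specific parameters live in disjoint summands (which kills the off-diagonal task blocks) while $\gamma$ is common to all of them. The one point worth flagging is a normalization bookkeeping issue: \Cref{eq:task-level-objective} carries an explicit $1/n_j$ in front of $\sum_i\ell_{ji}$, which is suppressed in the displayed blocks; I would either keep this factor throughout (it does not affect the structure) or, to match Lemma~\ref{lemma:Hessian_data_level}, adopt the convention that $\ell_{ki}$ already absorbs the per-task normalization. It is also worth noting that this lemma is the exact analogue of the instance-level Lemma~\ref{lemma:Hessian_data_level} under the specialization $\sigma_{ki}\equiv\sigma_k$ (all instances in a task share one weight), the sole difference being that here $\Omega_j$ is itself scaled by $\sigma_j$, which is why each $\Omega$-Hessian term above carries an extra factor $\sigma_k$; hence the same proof template transfers essentially verbatim.
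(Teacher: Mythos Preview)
Your proposal is correct and matches the paper's approach: the paper states Lemma~\ref{lemma:Hessian_task_level} without proof, treating the block structure as an immediate consequence of direct differentiation of the separable objective, which is exactly the block-by-block computation you outline. Your observation about the $1/n_k$ normalization discrepancy between \Cref{eq:task-level-objective} and the displayed blocks is accurate bookkeeping that the paper leaves implicit.
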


\begin{lemma}[Influence Scores for Task-Level Analysis]
\label{lemma:task-level-influence-functions}
    Assume that the objective \(\mathcal{L}(\boldsymbol{w}, \boldsymbol{\sigma})\) is twice differentiable and strictly convex in \(\boldsymbol{w}\). Then, the optimal solution \( \hat{\boldsymbol{w}}(\boldsymbol{\sigma}) = \arg\min_{\boldsymbol{w}} \mathcal{L}(\boldsymbol{w}, \boldsymbol{\sigma}) \) satisfies \( \frac{\partial \mathcal{L}(\hat{\boldsymbol{w}}(\boldsymbol{\sigma}), \boldsymbol{\sigma})}{\partial \boldsymbol{w}} = 0 \). Furthermore, we have:
    \begin{equation*}
        \frac{\partial \hat{\boldsymbol{w}}(\boldsymbol{\sigma})}{\partial \sigma_{k}} = - H(\hat{\boldsymbol{w}}(\boldsymbol{\sigma}), \boldsymbol{\sigma})^{-1} \left(0, \cdots, 0, \underset{k\text{-th block}}{\sum_{i=1}^{n_k} \frac{\partial \ell_{ki}}{\partial \theta_k} + \frac{\partial \Omega_k}{\partial \theta_k}}, 0, \cdots, 0, \underset{(K+1)\text{-th block}}{\sum_{i=1}^{n_k} \frac{\partial \ell_{ki}}{\partial \gamma} + \frac{\partial \Omega_k}{\partial \gamma}}\right)^\top,
    \end{equation*}
    where \(H(\boldsymbol{w}, \boldsymbol{\sigma}) \in \mathbb{R}^{(\sum_{k=1}^K d_k + p) \times (\sum_{k=1}^K d_k + p)}\) is the Hessian matrix of \(\mathcal{L}(\boldsymbol{w}, \boldsymbol{\sigma})\) with respect to \(\boldsymbol{w}\).
\end{lemma}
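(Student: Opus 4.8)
The plan is to prove this the same way influence functions are always derived---implicit differentiation of the first-order optimality condition---so that the argument is essentially the verbatim one behind \Cref{lemma:data-level-influence-functions}, with $\sigma_k$ now reweighting an entire task block instead of a single datum.

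First I would record the optimality condition. Restricting attention to $\boldsymbol{\sigma}$ in a neighborhood of $\mathbf{1}$ on which each $\sigma_j$ stays positive, $\mathcal{L}(\boldsymbol{w},\boldsymbol{\sigma})$ in \Cref{eq:task-level-objective} is twice differentiable and strictly convex in $\boldsymbol{w}$; hence its unique minimizer $\hat{\boldsymbol{w}}(\boldsymbol{\sigma})$ is the unique stationary point, giving $\nabla_{\boldsymbol{w}}\mathcal{L}(\hat{\boldsymbol{w}}(\boldsymbol{\sigma}),\boldsymbol{\sigma})=0$, which is the first assertion. In the convention used here, strict convexity also makes the Hessian $H(\hat{\boldsymbol{w}}(\boldsymbol{\sigma}),\boldsymbol{\sigma})=\nabla^2_{\boldsymbol{w}}\mathcal{L}$ positive definite, hence invertible, so the implicit function theorem applied to $(\boldsymbol{w},\boldsymbol{\sigma})\mapsto\nabla_{\boldsymbol{w}}\mathcal{L}(\boldsymbol{w},\boldsymbol{\sigma})$ shows $\hat{\boldsymbol{w}}(\boldsymbol{\sigma})$ is $C^1$ in $\boldsymbol{\sigma}$ near $\mathbf{1}$.

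Next I would differentiate the identity $\nabla_{\boldsymbol{w}}\mathcal{L}(\hat{\boldsymbol{w}}(\boldsymbol{\sigma}),\boldsymbol{\sigma})\equiv 0$ with respect to $\sigma_k$. The chain rule gives
\[
H(\hat{\boldsymbol{w}}(\boldsymbol{\sigma}),\boldsymbol{\sigma})\,\frac{\partial\hat{\boldsymbol{w}}(\boldsymbol{\sigma})}{\partial\sigma_k}
\;+\;
\left.\frac{\partial}{\partial\sigma_k}\nabla_{\boldsymbol{w}}\mathcal{L}(\boldsymbol{w},\boldsymbol{\sigma})\right|_{\boldsymbol{w}=\hat{\boldsymbol{w}}(\boldsymbol{\sigma})}=0 .
\]
The crucial structural observation is that $\mathcal{L}(\boldsymbol{w},\boldsymbol{\sigma})=\sum_{j=1}^K\sigma_j\big[\tfrac{1}{n_j}\sum_i\ell_{ji}(\theta_j,\gamma)+\Omega_j(\theta_j,\gamma)\big]$ is affine in $\boldsymbol{\sigma}$, so $\partial_{\sigma_k}\nabla_{\boldsymbol{w}}\mathcal{L}$ is simply the $\boldsymbol{w}$-gradient of the $k$-th bracketed summand, every other task summand being annihilated. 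Because that summand depends on $\boldsymbol{\theta}$ only through $\theta_k$, this gradient is supported on the $k$-th task block, where it equals $\sum_i\partial\ell_{ki}/\partial\theta_k+\partial\Omega_k/\partial\theta_k$, and on the shared $(K{+}1)$-th block, where it equals $\sum_i\partial\ell_{ki}/\partial\gamma+\partial\Omega_k/\partial\gamma$, in the coordinate layout of \Cref{lemma:Hessian_task_level}; all other task blocks vanish. This is exactly the vector appearing in the statement.

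Finally I would left-multiply by $H(\hat{\boldsymbol{w}}(\boldsymbol{\sigma}),\boldsymbol{\sigma})^{-1}$ and rearrange, obtaining
\[
\frac{\partial\hat{\boldsymbol{w}}(\boldsymbol{\sigma})}{\partial\sigma_k}
=-\,H(\hat{\boldsymbol{w}}(\boldsymbol{\sigma}),\boldsymbol{\sigma})^{-1}
\Big(0,\cdots,0,\ \textstyle\sum_i\tfrac{\partial\ell_{ki}}{\partial\theta_k}+\tfrac{\partial\Omega_k}{\partial\theta_k},\ 0,\cdots,0,\ \sum_i\tfrac{\partial\ell_{ki}}{\partial\gamma}+\tfrac{\partial\Omega_k}{\partial\gamma}\Big)^\top ,
\]
which is the claimed formula. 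I do not expect a genuine obstacle here: this is the standard influence-function computation, and the only thing requiring care is bookkeeping---confirming that the linear dependence on $\sigma_k$ really does kill the other $K-1$ task summands, and placing the surviving gradient correctly within the $(\sum_k d_k+p)$-dimensional parameter vector used in \Cref{lemma:Hessian_task_level}. (Alternatively one could combine the first-order condition for \Cref{eq:task-level-objective} with the explicit block inverse of $H$ obtained in the analysis of \Cref{lemma:Hessian_task_level}, but the direct implicit-differentiation route is the cleanest.)
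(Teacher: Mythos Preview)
Your proposal is correct and is essentially the same approach as the paper's: the paper's own proof is a one-line appeal to the classical influence-function framework of Koh and Liang, and your implicit-differentiation argument is precisely that derivation written out in full. The only content beyond the paper's proof is your careful bookkeeping of which blocks survive, which is sound.
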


\begin{proof}
    The result is obtained by applying the classical influence function framework as outlined in \cite{koh2017understanding}.
\end{proof}

In Proposition~\ref{prop:task-level}, we provide the analytical form for the influence of data from one task on the parameters of another task and the shared parameters. The Hessian matrix of \(\mathcal{L}(\boldsymbol{w}, \boldsymbol{\sigma})\) with respect to \(\boldsymbol{w}\) shares the same block structure as shown in~(\ref{eq:Hessian_block_structure}). Let \(H_{kl}\) denote the \((k, l)\)-th block of the Hessian matrix, with the details provided in Lemma~\ref{lemma:Hessian_task_level}. Let $N$ be defined as in Proposition~\ref{prop:data-level}.
\vspace{-1.5mm}
\begin{proposition}[Task-Level Between-task Influence] \label{prop:task-level}
    Under the assumptions of Proposition~\ref{prop:data-level}, for any two tasks \(k \neq l\) where \(1 \leq k, l \leq K\), the influence of data from task \(l\) on the task-specific parameters of task \(k\), \(\hat{\theta}_k\), is given by
    \begin{equation} \label{eq:task-level-between-task}
            \frac{\partial \hat{\theta}_k}{\partial \sigma_l} = -H_{k k}^{-1} H_{k, K+1} \cdot \frac{\partial \hat{\gamma}}{\partial \sigma_{l}},
    \end{equation}
    where $\frac{\partial \hat{\gamma}}{\partial \sigma_{l}}$ is the influence of  data  from task \(l\) on the shared parameters, \(\hat{\gamma}\), and is given by
    \begin{equation} \label{eq:task-level-shared-params}
        \frac{\partial \hat{\gamma}}{\partial \sigma_{l}} = N^{-1}  H_{K+1, l} H_{l l}^{-1} \left[\sum_{i=1}^{n_l} \frac{\partial \ell_{li}}{\partial \theta_l} + \frac{\partial \Omega_l}{\partial \theta_l}\right] -  N^{-1}  \left[\sum_{i=1}^{n_l} \frac{\partial \ell_{li}}{\partial \gamma} + \frac{\partial \Omega_l}{\partial \gamma}\right].
    \end{equation}
\end{proposition}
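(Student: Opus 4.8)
The plan is to mirror the derivation of Proposition~\ref{prop:data-level}, replacing the per-datum perturbation direction by the per-task direction that appears in Lemma~\ref{lemma:task-level-influence-functions}. By that lemma, $\partial\hat{\boldsymbol{w}}/\partial\sigma_l = -H^{-1} v_l$, where $v_l$ is the vector whose $k'$-th task block is $\sum_{i=1}^{n_l}\partial\ell_{li}/\partial\theta_l + \partial\Omega_l/\partial\theta_l$ if $k'=l$ and zero otherwise, and whose $(K+1)$-th (shared) block is $\sum_{i=1}^{n_l}\partial\ell_{li}/\partial\gamma + \partial\Omega_l/\partial\gamma$. Since the objective is twice differentiable and strictly convex by the hypothesis inherited from Proposition~\ref{prop:data-level}, the Hessian $H$ is positive definite, hence invertible, and in particular each $H_{kk}$ is invertible and $N$ is invertible; thus Lemma~\ref{lemma:Hessian_inverse} applies and gives the block form of $H^{-1}$.

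First I would read off $\partial\hat\gamma/\partial\sigma_l$ as the $(K+1)$-th block of $-H^{-1}v_l$. Using Lemma~\ref{lemma:Hessian_inverse}, this block equals $-\sum_{j=1}^K [H^{-1}]_{K+1,j}\,(v_l)_j - [H^{-1}]_{K+1,K+1}\,(v_l)_{K+1}$. Because $v_l$ has only its $l$-th task block and its shared block nonzero, the sum collapses to the $j=l$ term, giving $-[H^{-1}]_{K+1,l}(v_l)_l - N^{-1}(v_l)_{K+1}$. Substituting $[H^{-1}]_{K+1,l} = -N^{-1}H_{K+1,l}H_{ll}^{-1}$ (the transpose of the $[H^{-1}]_{l,K+1}$ formula in Lemma~\ref{lemma:Hessian_inverse}) yields exactly \Cref{eq:task-level-shared-params}. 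Second, I would extract $\partial\hat\theta_k/\partial\sigma_l$ for $k\neq l$ as the $k$-th task block of $-H^{-1}v_l$, i.e. $-\sum_{j=1}^K[H^{-1}]_{k,j}(v_l)_j - [H^{-1}]_{k,K+1}(v_l)_{K+1}$. Again only $j=l$ survives, so this is $-[H^{-1}]_{k,l}(v_l)_l - [H^{-1}]_{k,K+1}(v_l)_{K+1}$. Plugging in $[H^{-1}]_{k,l} = H_{kk}^{-1}H_{k,K+1}N^{-1}H_{K+1,l}H_{ll}^{-1}$ (valid since $k\neq l$, so the $\mathbf1(k=l)$ term drops) and $[H^{-1}]_{k,K+1} = -H_{kk}^{-1}H_{k,K+1}N^{-1}$, I would factor out $-H_{kk}^{-1}H_{k,K+1}$ and recognize the remaining bracket as precisely the right-hand side of \Cref{eq:task-level-shared-params}, which is $\partial\hat\gamma/\partial\sigma_l$; this gives \Cref{eq:task-level-between-task}.

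The only genuine subtlety — and the step I expect to require the most care — is the bookkeeping of the perturbation vector: unlike the instance-level case, the task-level weight $\sigma_l$ multiplies both the loss terms \emph{and} the regularizer $\Omega_l$, so the gradient block $(v_l)_{k'}$ carries the extra $\partial\Omega_l/\partial\theta_l$ and $\partial\Omega_l/\partial\gamma$ contributions; I must make sure these ride along untouched through the block-matrix multiplication and land in the final bracketed expressions. Everything else is a direct substitution of the block-inverse formulas from Lemma~\ref{lemma:Hessian_inverse} followed by collecting the common left factor $-H_{kk}^{-1}H_{k,K+1}$, so no new estimates or convexity arguments beyond those already established are needed.
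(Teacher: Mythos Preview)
Your proposal is correct and follows essentially the same route as the paper: invoke Lemma~\ref{lemma:task-level-influence-functions} to write $\partial\hat{\boldsymbol w}/\partial\sigma_l=-H^{-1}v_l$ with the task-level perturbation vector (including the regularizer gradients), then substitute the block-inverse formulas of Lemma~\ref{lemma:Hessian_inverse} and read off the $(K{+}1)$-th and $k$-th blocks. Your observation that strict convexity makes $H$ positive definite, so each $H_{kk}$ and the Schur complement $N$ are automatically invertible, is a clean way to justify applying Lemma~\ref{lemma:Hessian_inverse}.
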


\section{Experiments}\label{sec:exp_appendix}

\subsection{Integration of Computational Tricks in TRAK}
\label{subsec:trak_details}
TRAK~\citep{park2023trak} can be viewed as a variant of influence function that incorporates several computational tricks to improve the scalability and stability of influence scores estimation, especially in the context of large neural network models. The most salient tricks used in TRAK include:
\begin{itemize}
    \item Dimension reduction: the model parameters are projected into a lower-dimensional space using random projections to reduce computational costs.
    \item Ensemble: TRAK ensembles the influence scores using multiple independently trained models, which enhances the stability of the estimation against the randomness from the training process. 
    \item Sparsification: TRAK post-processes the influence scores through soft-thresholding, which sparsifies the scores by setting the scores with small magnitudes as zero. 
\end{itemize}
These tricks improves the computational efficiency and the robustness in the influence score estimation. We integrate these tricks into MTIF when applied to neural network models.

\subsection{Experiment Details for Retraining Approximation Quality}
\label{subsec:loo_approx_details}

\subsubsection{Synthetic and HAR Datasets and Model Configurations}
\label{subsubsec:synthetic_har_details}
\paragraph{Synthetic Dataset}
The synthetic dataset for multi-task linear regression is generated with \(m=10\) tasks, where each dataset contains \(n = 200\) samples \((x_{ji}, y_{ji})\), split into training and test sets. The input vectors \(x_{ji}\) are independently sampled from a normal distribution \(\mathcal{N}(0, I_d)\) with dimensionality \(d = 50\). The response \(y_{ji}\) is generated using a linear model \(y_{ji} = x_{ji}^\top \theta_j^\star + \epsilon_{ji}\), where \(\epsilon_{ji} \sim \mathcal{N}(0, 1)\) is independent noise.

The coefficient vectors \(\theta_j^\star\) for task \(j\) are generated by starting with a common vector \(\beta^\star = 2e_1\) (where \(e_1\) is a unit vector) and adding random perturbations \(\delta_j\), sampled from a sphere with norm \(\delta\). For a fraction \(\alpha m\) of the tasks, \(\theta_j^\star\) is replaced with independent random vectors. This parameterization introduces variability in task similarity, with \(\delta\) controlling the perturbation magnitude and \(\alpha\) determining the fraction of unrelated tasks. For more details, we refer readers to \citet{duan2023adaptiverobustmultitasklearning}. 

To explore different task similarity scenarios, we generate datasets under varying \(\delta\) and \(\alpha\) values. The datasets are randomly divided into training, validation, and test sets with an 1:1:1 ratio. 
\paragraph{Human Activity Recognition (HAR) Dataset}
The Human Activity Recognition (HAR) dataset~\citep{anguita2013public} was constructed from recordings of 30 volunteers performing various daily activities while carrying smartphones equipped with inertial sensors on their waist. Each participant contributed an average of 343.3 samples, ranging from 281 to 409. Each sample corresponds to one of six activities: walking, walking upstairs, walking downstairs, sitting, standing, or lying.

The feature vector for each sample is 561-dimensional, capturing information from both the time and frequency domains, and are reduced to 100 dimensions using Principal Component Analysis (PCA). To frame the dataset as a multitask learning problem, following \citet{duan2023adaptiverobustmultitasklearning}, we treat each volunteer as a separate task. The problem is formulated as a multi-task logistic regression problem to classify whether a participant is sitting or engaged in any other activity. For each task, 10\% of the data is randomly selected for testing, another 10\% for validation, and the remaining data is used for training.

\subsubsection{Additional Instance-Level Approximation Results}
\label{subsubsec:additional_loo_approx_results}
Here we present additional results for the instance-level MTIF approximation quality in \Cref{subsec:instance-level-approx}.
\Cref{fig:LOO_linear_0.4,fig:LOO_linear_0.8} show the results on the synthetic dataset for each task selected as the target task with different \(\delta\) and \(\alpha\). 
\Cref{fig:LOO_linear_1_2,fig:LOO_linear_3_5} show results when the data to be deleted are from different tasks than the tasks in the main text. The linear relation in both cases is still preserved, meaning our MTIF align well with LOO scores.

\begin{figure}
    \centering
    \begin{minipage}{0.23\linewidth}
        \centering
        \includegraphics[width=\linewidth]{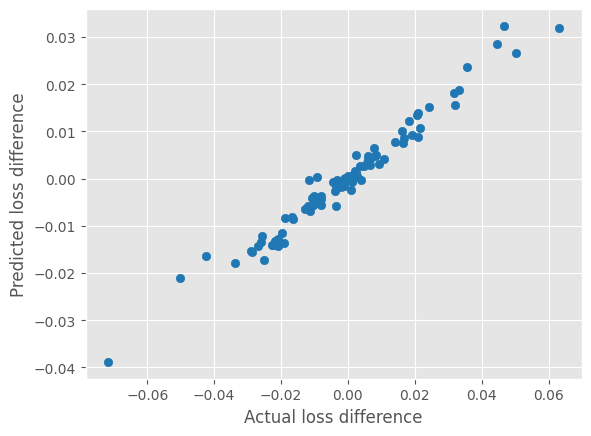}
        \label{fig:LOO_linear_synth_0.4_1}
    \end{minipage}
    \hfill
    \begin{minipage}{0.23\linewidth}
        \centering
        \includegraphics[width=\linewidth]{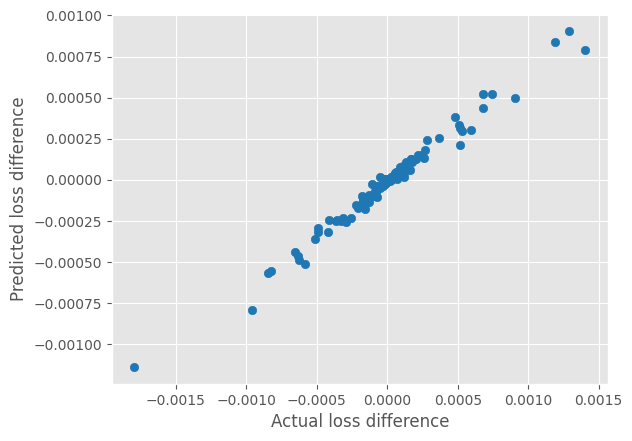}
        \label{fig:LOO_linear_synth_0.4_2}
    \end{minipage}
    \hfill
    \begin{minipage}{0.23\linewidth}
        \centering
        \includegraphics[width=\linewidth]{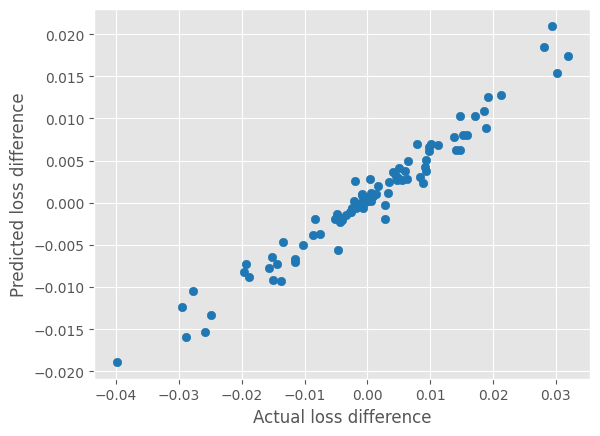}
        \label{fig:LOO_linear_synth_0.4_3}
    \end{minipage}
    \hfill
    \begin{minipage}{0.23\linewidth}
        \centering
        \includegraphics[width=\linewidth]{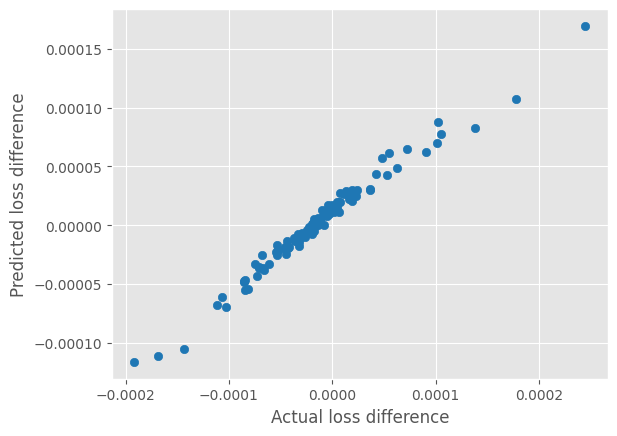}
        \label{fig:LOO_linear_synth_0.4_4}
    \end{minipage}
    \caption{LOO experiments on linear regression. The x-axis is the actual loss difference obtained by LOO retraining, and the y-axis is the predicted loss difference calculated by MTIF. The first two figures from the left show within-task and between-task LOO (in order) results with \(\delta=0.4\) and \(\alpha=0\), while the other two figures present within-task and between-task results (in order)  with \(\delta=0.4\) and \(\alpha=0.2\).}
    \label{fig:LOO_linear_0.4} 
    \vspace{10pt} 

    \centering
    \begin{minipage}{0.23\linewidth}
        \centering
        \includegraphics[width=\linewidth]{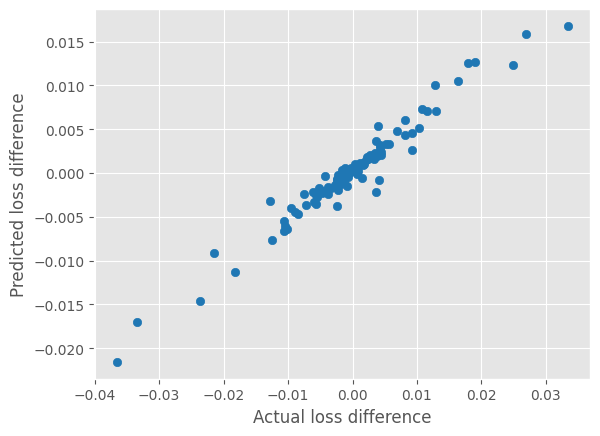}
        \label{fig:LOO_linear_synth_0.8_1}
    \end{minipage}
    \hfill
    \begin{minipage}{0.23\linewidth}
        \centering
        \includegraphics[width=\linewidth]{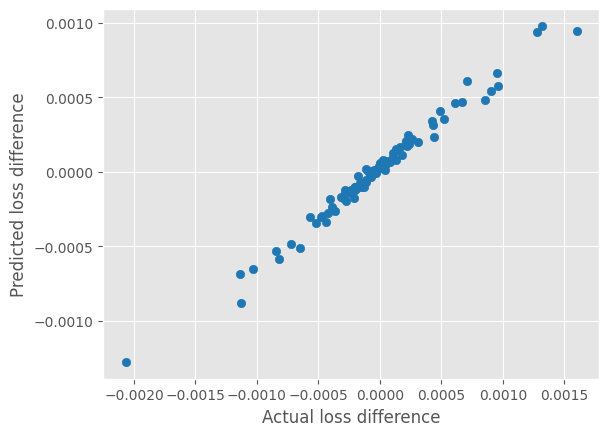}
        \label{fig:LOO_linear_synth_0.8_2}
    \end{minipage}
    \hfill
    \begin{minipage}{0.23\linewidth}
        \centering
        \includegraphics[width=\linewidth]{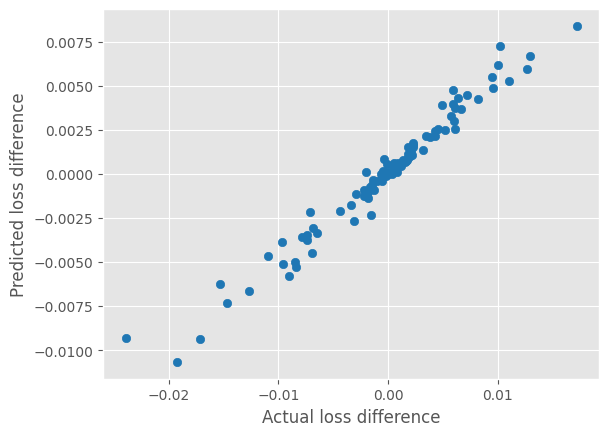}
        \label{fig:LOO_linear_synth_0.8_3}
    \end{minipage}
    \hfill
    \begin{minipage}{0.23\linewidth}
        \centering
        \includegraphics[width=\linewidth]{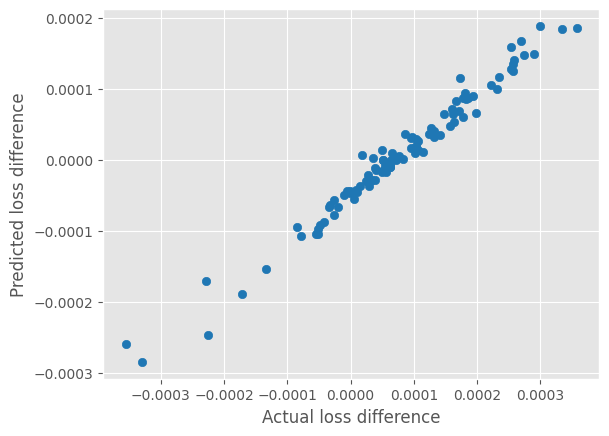}
        \label{fig:LOO_linear_synth_0.8_4}
    \end{minipage}
    \caption{LOO experiments on linear regression. The x-axis is the actual loss difference obtained by LOO retraining, and the y-axis is the predicted loss difference calculated by MTIF. The first two figures from the left show within-task and between-task LOO (in order) results with \(\delta=0.8\) and \(\alpha=0\), while the other two figures present within-task and between-task results (in order)  with \(\delta=0.8\) and \(\alpha=0.2\).}
    \label{fig:LOO_linear_0.8} 
    \vspace{10pt} 

    \centering
    \begin{minipage}{0.23\linewidth}
        \centering
        \includegraphics[width=\linewidth]{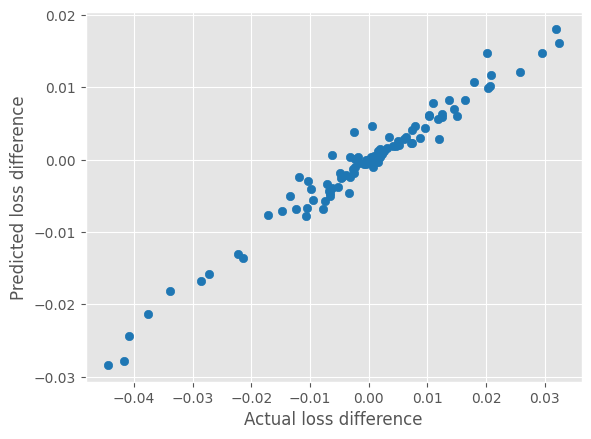}
        \label{fig:LOO_linear_synth_1_1}
    \end{minipage}
    \hfill
    \begin{minipage}{0.23\linewidth}
        \centering
        \includegraphics[width=\linewidth]{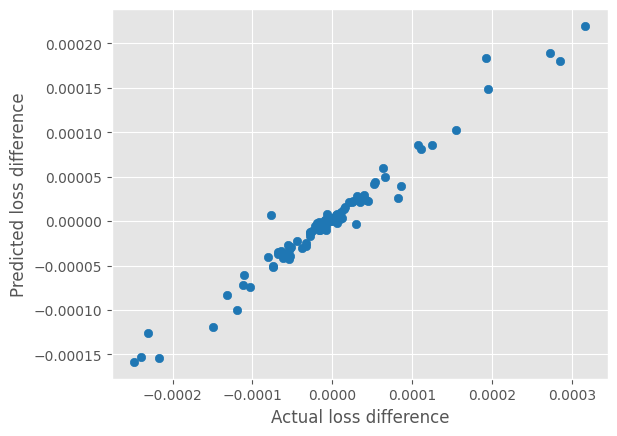}
        \label{fig:LOO_linear_synth_1_2}
    \end{minipage}
    \hfill
    \begin{minipage}{0.23\linewidth}
        \centering
        \includegraphics[width=\linewidth]{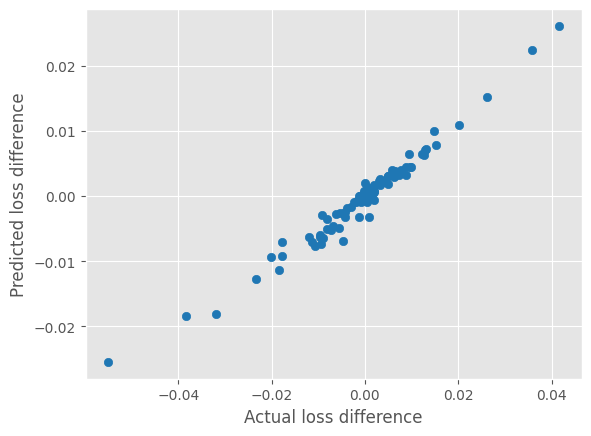}
        \label{fig:LOO_linear_synth_2_3}
    \end{minipage}
    \hfill
    \begin{minipage}{0.23\linewidth}
        \centering
        \includegraphics[width=\linewidth]{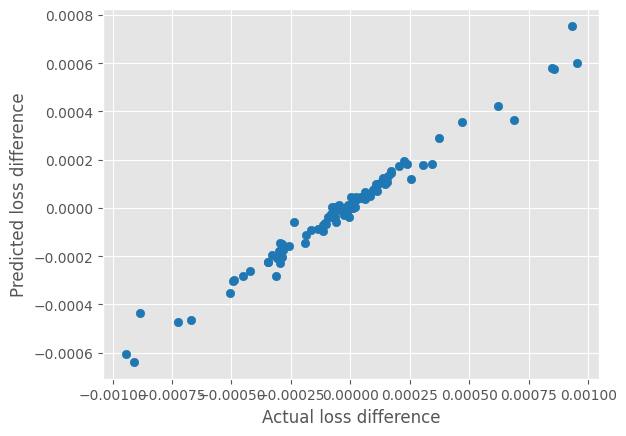}
        \label{fig:LOO_linear_synth_2_4}
    \end{minipage}
    \caption{LOO experiments on linear regression. The x-axis is the actual loss difference obtained by LOO retraining, and the y-axis is the predicted loss difference calculated by MTIF. The first two figures from the left show within-task and between-task LOO (in order) results with deleted data from task 1, while the other two figures present within-task and between-task results (in order) with deleted data from task 2.}
    \label{fig:LOO_linear_1_2} 
    \vspace{10pt} 

    \centering
    \begin{minipage}{0.23\linewidth}
        \centering
        \includegraphics[width=\linewidth]{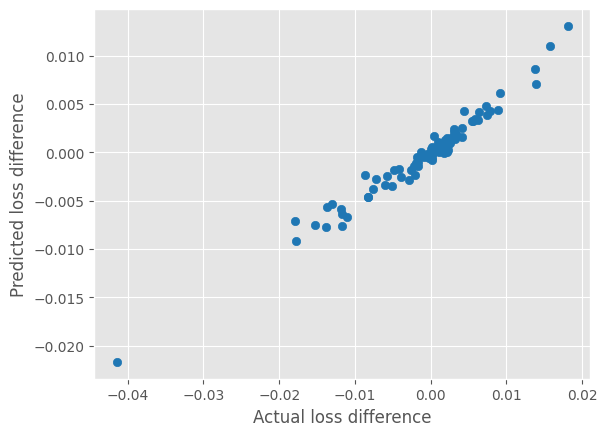}
        \label{fig:LOO_linear_synth_3_1}
    \end{minipage}
    \hfill
    \begin{minipage}{0.23\linewidth}
        \centering
        \includegraphics[width=\linewidth]{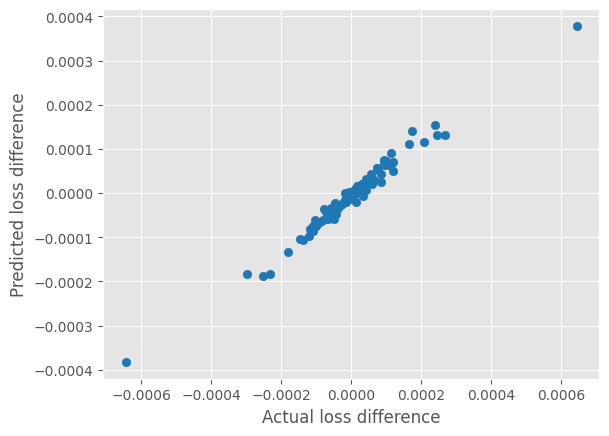}
        \label{fig:LOO_linear_synth_3_2}
    \end{minipage}
    \hfill
    \begin{minipage}{0.23\linewidth}
        \centering
        \includegraphics[width=\linewidth]{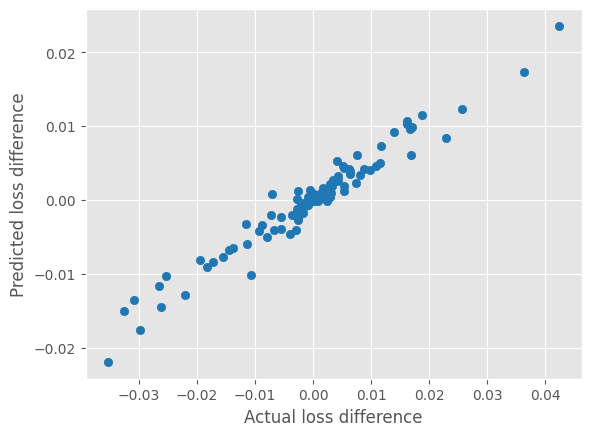}
        \label{fig:LOO_linear_synth_5_3}
    \end{minipage}
    \hfill
    \begin{minipage}{0.23\linewidth}
        \centering
        \includegraphics[width=\linewidth]{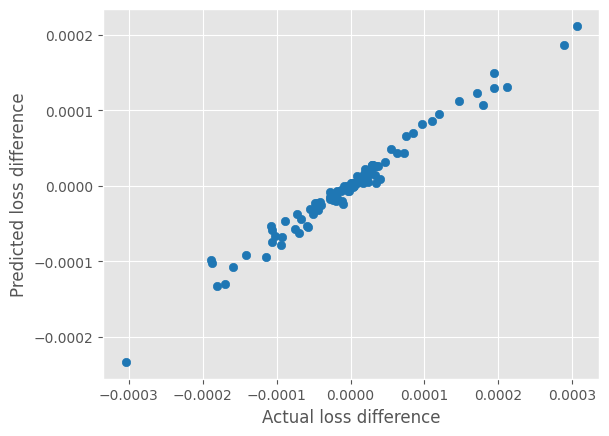}
        \label{fig:LOO_linear_synth_5_4}
    \end{minipage}
    \caption{LOO experiments on linear regression. The x-axis is the actual loss difference obtained by LOO retraining, and the y-axis is the predicted loss difference calculated by MTIF. The first two figures from the left show within-task and between-task LOO (in order) results with deleted data from task 3, while the other two figures present within-task and between-task results (in order) with deleted data from task 5.}
    \label{fig:LOO_linear_3_5} 
\end{figure}

\subsubsection{Additional Task-Level Approximation Results}
\label{subsubsec:additional_loto_approx_results}

Here we present additional results for the task-level MTIF approximation quality in \Cref{subsec:instance-level-approx}.

\paragraph{Sensitivity to synthetic data setting.} \Cref{tab:1.0_0.2,tab:1.0_0,tab:0.6_0.2,tab:0.6_0,tab:0.4_0.2,tab:0.4_0} present results under various combinations of \(\delta\) and \(\alpha\). We observe that the correlation scores remain high across different settings.

\begin{table}
\scriptsize
\centering
\caption{The average Spearman correlation coefficients over 5 random seeds on the synthetic dataset. \(\delta = 1.0 \) and \(\alpha = 0.2\)}\label{tab:1.0_0.2}
\begin{tabular}{c c c c c}
\toprule
Task 1 & Task 2 & Task 3 & Task 4 & Task 5 \\ 
0.84 $\pm$ 0.05 & 0.72 $\pm$ 0.05 & 0.74 $\pm$ 0.11 & 0.81 $\pm$ 0.05 & 0.71 $\pm$ 0.09 \\ \midrule
Task 6 & Task 7 & Task 8 & Task 9 & Task 10 \\ 
0.74 $\pm$ 0.04 & 0.74 $\pm$ 0.07 & 0.84 $\pm$ 0.03 & 0.74 $\pm$ 0.03 & 0.65 $\pm$ 0.07 \\ \bottomrule
\end{tabular}

\centering
\caption{The average Spearman correlation coefficients over 5 random seeds on the synthetic dataset. \(\delta = 1.0\) and \(\alpha = 0\).}\label{tab:1.0_0}
\begin{tabular}{c c c c c}
\toprule
Task 1 & Task 2 & Task 3 & Task 4 & Task 5 \\ 
0.75 $\pm$ 0.07 & 0.67 $\pm$ 0.06 & 0.81 $\pm$ 0.03 & 0.70 $\pm$ 0.05 & 0.60 $\pm$ 0.10 \\ \midrule
Task 6 & Task 7 & Task 8 & Task 9 & Task 10 \\ 
0.39 $\pm$ 0.13 & 0.66 $\pm$ 0.06 & 0.75 $\pm$ 0.03 & 0.71 $\pm$ 0.05 & 0.61 $\pm$ 0.03 \\ \bottomrule
\end{tabular}

\centering
\caption{The average Spearman correlation coefficients over 5 random seeds on the synthetic dataset. $\delta = 0.6$ and $\alpha = 0.2$.}\label{tab:0.6_0.2}
\begin{tabular}{c c c c c}
\toprule
Task 1 & Task 2 & Task 3 & Task 4 & Task 5 \\ 
0.84 $\pm$ 0.04 & 0.67 $\pm$ 0.07 & 0.69 $\pm$ 0.12 & 0.77 $\pm$ 0.05 & 0.71 $\pm$ 0.05 \\ \midrule
Task 6 & Task 7 & Task 8 & Task 9 & Task 10 \\ 
0.73 $\pm$ 0.07 & 0.65 $\pm$ 0.06 & 0.77 $\pm$ 0.05 & 0.69 $\pm$ 0.05 & 0.56 $\pm$ 0.11 \\ \bottomrule
\end{tabular}

\centering
\caption{The average Spearman correlation coefficients over 5 random seeds on the synthetic dataset. $\delta = 0.6$ and $\alpha = 0$.}\label{tab:0.6_0}
\begin{tabular}{c c c c c}
\toprule
Task 1 & Task 2 & Task 3 & Task 4 & Task 5 \\ 
0.77 $\pm$ 0.05 & 0.56 $\pm$ 0.09 & 0.69 $\pm$ 0.07 & 0.63 $\pm$ 0.06 & 0.57 $\pm$ 0.13 \\ \midrule
Task 6 & Task 7 & Task 8 & Task 9 & Task 10 \\ 
0.38 $\pm$ 0.16 & 0.62 $\pm$ 0.04 & 0.72 $\pm$ 0.03 & 0.65 $\pm$ 0.04 & 0.46 $\pm$ 0.09 \\ \bottomrule
\end{tabular}

\centering
\caption{The average Spearman correlation coefficients over 5 random seeds on the synthetic dataset. \(\delta = 0.4 \) and \(\alpha = 0.2\).}\label{tab:0.4_0.2}
\begin{tabular}{c c c c c}
\toprule
Task 1 & Task 2 & Task 3 & Task 4 & Task 5 \\ 
0.79 $\pm$ 0.05 & 0.62 $\pm$ 0.06 & 0.56 $\pm$ 0.13 & 0.73 $\pm$ 0.05 & 0.64 $\pm$ 0.07 \\ \midrule
Task 6 & Task 7 & Task 8 & Task 9 & Task 10 \\ 
0.67 $\pm$ 0.08 & 0.52 $\pm$ 0.05 & 0.70 $\pm$ 0.04 & 0.65 $\pm$ 0.04 & 0.56 $\pm$ 0.09 \\ \bottomrule
\end{tabular}

\centering
\caption{The average Spearman correlation coefficients over 5 random seeds on the synthetic dataset. $\delta = 0.4$ and $\alpha = 0$.}\label{tab:0.4_0}
\begin{tabular}{c c c c c}
\toprule
Task 1 & Task 2 & Task 3 & Task 4 & Task 5 \\ 
0.67 $\pm$ 0.08 & 0.52 $\pm$ 0.10 & 0.56 $\pm$ 0.09 & 0.64 $\pm$ 0.06 & 0.54 $\pm$ 0.15 \\ \midrule
Task 6 & Task 7 & Task 8 & Task 9 & Task 10 \\ 
0.42 $\pm$ 0.16 & 0.52 $\pm$ 0.08 & 0.65 $\pm$ 0.05 & 0.56 $\pm$ 0.04 & 0.38 $\pm$ 0.12 \\ \bottomrule
\end{tabular}
\end{table}
\begin{table}
\scriptsize
\centering
\caption{The average Spearman correlation coefficients over 5 random seeds on the synthetic dataset for MTIF, TAG, and Cosine across 10 tasks.}
\label{tab:ours-tag-cosine-style}
\begin{tabular}{cccccc}
\toprule
Task & Task 1 & Task 2 & Task 3 & Task 4 & Task 5 \\ \midrule
MTIF   & 0.84 $\pm$ 0.05 & 0.72 $\pm$ 0.05 & 0.74 $\pm$ 0.11 & 0.81 $\pm$ 0.05 & 0.71 $\pm$ 0.09 \\
TAG    & 0.57 $\pm$ 0.03 & 0.63 $\pm$ 0.07 & 0.49 $\pm$ 0.11 & 0.56 $\pm$ 0.05 & 0.69 $\pm$ 0.04 \\
Cosine & 0.52 $\pm$ 0.04 & 0.48 $\pm$ 0.07 & 0.39 $\pm$ 0.12 & 0.47 $\pm$ 0.09 & 0.58 $\pm$ 0.06 \\ \midrule
Task & Task 6 & Task 7 & Task 8 & Task 9 & Task 10 \\ \midrule
MTIF   & 0.74 $\pm$ 0.04 & 0.74 $\pm$ 0.07 & 0.84 $\pm$ 0.03 & 0.74 $\pm$ 0.03 & 0.65 $\pm$ 0.07 \\
TAG    & 0.55 $\pm$ 0.12 & 0.42 $\pm$ 0.06 & 0.44 $\pm$ 0.24 & 0.66 $\pm$ 0.08 & 0.61 $\pm$ 0.07 \\
Cosine & 0.47 $\pm$ 0.12 & 0.34 $\pm$ 0.05 & 0.40 $\pm$ 0.22 & 0.62 $\pm$ 0.09 & 0.51 $\pm$ 0.08 \\ \bottomrule
\end{tabular}
\end{table}

\begin{table}
\scriptsize
\centering
\caption{The average Spearman correlation coefficients over 5 random seeds on HAR dataset for MTIF, TAG, and Cosine across 30 tasks.}
\label{tab:ours-tag-cosine-grouped}
\begin{tabular}{ccccccc}
\toprule
& Task 1 & Task 2 & Task 3 & Task 4 & Task 5 & Task 6 \\ \midrule
MTIF   & \(0.87 \pm 0.02\) & \(0.90 \pm 0.02\) & \(0.88 \pm 0.01\) & \(0.91 \pm 0.03\) & \(0.91 \pm 0.01\) & \(0.90 \pm 0.02\) \\
TAG    & \(0.26 \pm 0.13\) & \(0.42 \pm 0.11\) & \(0.55 \pm 0.09\) & \(0.22 \pm 0.07\) & \(0.60 \pm 0.07\) & \(0.55 \pm 0.08\) \\
Cosine & \(0.31 \pm 0.11\) & \(0.40 \pm 0.11\) & \(0.57 \pm 0.08\) & \(0.20 \pm 0.09\) & \(0.61 \pm 0.06\) & \(0.57 \pm 0.08\) \\ \midrule
& Task 7 & Task 8 & Task 9 & Task 10 & Task 11 & Task 12 \\ \midrule
MTIF   & \(0.90 \pm 0.01\) & \(0.88 \pm 0.02\) & \(0.92 \pm 0.01\) & \(0.91 \pm 0.02\) & \(0.89 \pm 0.02\) & \(0.86 \pm 0.01\) \\
TAG    & \(0.49 \pm 0.12\) & \(0.31 \pm 0.12\) & \(0.24 \pm 0.01\) & \(0.33 \pm 0.02\) & \(0.43 \pm 0.03\) & \(0.21 \pm 0.02\) \\
Cosine & \(0.46 \pm 0.11\) & \(0.31 \pm 0.14\) & \(0.26 \pm 0.03\) & \(0.34 \pm 0.01\) & \(0.46 \pm 0.04\) & \(0.18 \pm 0.11\) \\ \midrule
& Task 13 & Task 14 & Task 15 & Task 16 & Task 17 & Task 18 \\ \midrule
MTIF   & \(0.90 \pm 0.02\) & \(0.93 \pm 0.05\) & \(0.84 \pm 0.01\) & \(0.87 \pm 0.05\) & \(0.89 \pm 0.02\) & \(0.82 \pm 0.02\) \\
TAG    & \(0.54 \pm 0.03\) & \(0.57 \pm 0.03\) & \(0.43 \pm 0.02\) & \(0.48 \pm 0.03\) & \(0.64 \pm 0.05\) & \(0.44 \pm 0.02\) \\
Cosine & \(0.53 \pm 0.10\) & \(0.58 \pm 0.10\) & \(0.48 \pm 0.04\) & \(0.49 \pm 0.11\) & \(0.66 \pm 0.05\) & \(0.46 \pm 0.07\) \\ \midrule
& Task 19 & Task 20 & Task 21 & Task 22 & Task 23 & Task 24 \\ \midrule
MTIF   & \(0.85 \pm 0.02\) & \(0.91 \pm 0.02\) & \(0.93 \pm 0.02\) & \(0.80 \pm 0.01\) & \(0.80 \pm 0.02\) & \(0.82 \pm 0.05\) \\
TAG    & \(0.44 \pm 0.03\) & \(0.46 \pm 0.02\) & \(0.84 \pm 0.02\) & \(0.52 \pm 0.07\) & \(0.13 \pm 0.03\) & \(0.38 \pm 0.07\) \\
Cosine & \(0.48 \pm 0.05\) & \(0.47 \pm 0.07\) & \(0.84 \pm 0.10\) & \(0.53 \pm 0.08\) & \(0.16 \pm 0.12\) & \(0.45 \pm 0.10\) \\ \midrule
& Task 25 & Task 26 & Task 27 & Task 28 & Task 29 & Task 30 \\ \midrule
MTIF   & \(0.89 \pm 0.02\) & \(0.81 \pm 0.03\) & \(0.82 \pm 0.03\) & \(0.89 \pm 0.01\) & \(0.92 \pm 0.03\) & \(0.86 \pm 0.03\) \\
TAG    & \(0.56 \pm 0.04\) & \(0.14 \pm 0.11\) & \(0.41 \pm 0.10\) & \(0.14 \pm 0.11\) & \(0.72 \pm 0.04\) & \(0.41 \pm 0.11\) \\
Cosine & \(0.60 \pm 0.04\) & \(0.18 \pm 0.12\) & \(0.46 \pm 0.10\) & \(0.15 \pm 0.10\) & \(0.74 \pm 0.11\) & \(0.46 \pm 0.10\) \\ \bottomrule
\end{tabular}
\end{table}

\begin{table}
\scriptsize
\centering
\caption{The average Spearman correlation coefficients over 5 random seeds on CelebA dataset for MTIF, TAG, and Cosine across 9 tasks.}
\label{tab:ours-tag-cosine-ekfac}
\begin{tabular}{cccccc}
\toprule
 & Task 1 & Task 2 & Task 3 & Task 4 & Task 5 \\ \midrule
MTIF   & \(0.23 \pm 0.08\) & \(0.44 \pm 0.19\) & \(0.25 \pm 0.11\) & \(0.36 \pm 0.12\) & \(0.17 \pm 0.13\) \\

TAG    & \(-0.10 \pm 0.13\) & \(-0.10 \pm 0.14\) & \(0.09 \pm 0.06\) & \(0.40 \pm 0.08\) & \(0.00 \pm 0.12\) \\

Cosine & \(0.12 \pm 0.18\) & \(0.08 \pm 0.15\) & \(0.08 \pm 0.07\) & \(0.37 \pm 0.08\) & \(-0.10 \pm 0.13\) \\ \midrule
 & Task 6 & Task 7 & Task 8 & Task 9 \\ \midrule
MTIF   & \(0.35 \pm 0.08\) & \(0.25 \pm 0.07\) & \(0.11 \pm 0.09\) & \(0.18 \pm 0.12\)  \\

TAG    & \(-0.42 \pm 0.08\) & \(-0.26 \pm 0.17\) & \(0.06 \pm 0.13\) & \(0.16 \pm 0.16\)  \\

Cosine & \(-0.25 \pm 0.12\) & \(-0.25 \pm 0.14\) & \(-0.01 \pm 0.16\) & \(0.05 \pm 0.12\)  \\ \bottomrule
\end{tabular}
\end{table}

\paragraph{Comparison to baseline methods on more datasets and models.}
We incorporate two gradient-based baselines into our task-relatedness experiments for both linear regression and neural network settings: Cosine Similarity~\citep{azorin2023itsmatchbenchmark} and TAG~\citep{fifty2021efficiently}. Following the same procedure outlined in \emph{Task‑Level MTIF Approximation Quality} in~\Cref{subsec:instance-level-approx}, we evaluate task relatedness by designating one task as the target task, ranking the most influential tasks relative to it as respectively calculated by MTIF, Cosine Similarity, or TAG, and computing the ranking correlation coefficient with the ground-truth Leave-One-Task-Out (LOTO) scores. A higher correlation coefficient indicates better alignment with the LOTO scores, with values ranging from -1 (completely reversed alignment) to 1 (perfect alignment), and 0 representing random ranking. We experiment with linear regression on the synthetic dataset, logistic regression on the HAR dataset, and neural networks on the CelebA dataset.

The results in \Cref{tab:ours-tag-cosine-style,tab:ours-tag-cosine-grouped,tab:ours-tag-cosine-ekfac} show that our proposed MTIF method consistently outperforms the baselines across all scenarios. For the synthetic and HAR datasets, all methods achieve positive correlation scores across tasks, but MTIF consistently achieves the highest scores, often exceeding 0.7 for most tasks. In the CelebA dataset, estimating task relatedness in neural network models proves to be more challenging. While MTIF maintains positive scores, the baselines perform close to random, frequently yielding negative scores for many tasks. Although the baselines occasionally achieve slightly higher scores than MTIF on specific tasks, their performance is inconsistent. These findings underscore MTIF's reliability and superior ability to approximate task relatedness compared to the baselines.

\subsection{Experiment Details for MTIF‑Guided Data Selection}
\label{subsec:data_selection_appendix}

\subsubsection{Detailed Experimental Setup}\label{subsubsec:celeba_office_dataset}
Here we include more detailed experimental setup for \Cref{subsec:application-data-section}.

\paragraph{Datasets.} CelebA~\citep{liu2015faceattributes} is a large-scale face image dataset annotated with 40 attributes and widely used in the multitask learning (MTL) literature~\citep{fifty2021efficiently}. 

We randomly select 10 attributes as tasks for our experiments, modeling each task as a binary classification problem. The dataset is pre-partitioned into training, validation, and test sets. We sample a subset of 250 examples per task from each partition to construct our training, validation, and test sets. We do the sub-sampling as this is the regime where multitask learning outperforms single-task learning, which better mimics the common real-world multitask learning scenarios where the training data (at least for some tasks) are scarce. 

Office‑31~\citep{saenko2010adapting} comprises three domains---Amazon, DSLR, and Webcam---each defining a 31‑category classification task, with a total of 4,110 labeled images.
we partition each dataset into 60\% training, 20\% validation, and 20\% test splits. 

Office‑Home~\citep{venkateswara2017deep} contains four domains---Artistic (Art), Clip Art, Product, and Real‑World---each with 65 object categories, totaling 15,500 labeled images. Following \citet{lin2023libmtl}, we treat each domain as a task for MTL.
we partition each dataset into 60\% training, 20\% validation, and 20\% test splits.

\paragraph{Model training and tuning.} 
For all the three aforementioned datasets, we employ a pretrained ResNet‑18 backbone~\citep{he2016deep} and attach a separate linear head for each domain’s classification task. The model is trained with Adam optimizer~\citep{kingma2017adammethodstochasticoptimization} with learning rate 3e-4 and weight decay 1e-5.

In all experiments we use the same validation dataset for the hyperparameter tuning and early stopping for all baselines \& MTIF. The same validation dataset is also used to calculate influence scores in MTIF (so MTIF does not use extra data compared to baselines).

\end{document}